\documentclass{article} 
\usepackage{iclr2027_conference,times}
\usepackage{booktabs}
\usepackage{booktabs,xcolor,pifont}
\newcommand{\cmark}{\textcolor{green!55!black}{\ding{51}}}
\newcommand{\xmark}{\textcolor{red!75!black}{\ding{55}}}
\usepackage{enumitem}
\usepackage{multirow}

\usepackage{amsmath,amsfonts,bm}

\def\eqref#1{equation~\ref{#1}}

\def\1{\bm{1}}

\DeclareMathAlphabet{\mathsfit}{\encodingdefault}{\sfdefault}{m}{sl}
\SetMathAlphabet{\mathsfit}{bold}{\encodingdefault}{\sfdefault}{bx}{n}

\usepackage{hyperref}
\usepackage{url}
\usepackage[table]{xcolor}
\usepackage{graphicx}
\usepackage{float}      
\usepackage{caption}
\usepackage{amsthm}
\newtheorem{theorem}{Theorem}
\newtheorem{proposition}{Proposition}
\usepackage{pifont}

\usepackage[most]{tcolorbox}

\definecolor{suppbg}{RGB}{245,247,249}
\definecolor{suppborder}{RGB}{225,229,234}

\title{How Much Planning Is Enough?\\
Reducing Search and Computation in World-Model Planning}

\author{
\normalfont
Changbai Li$^{1*}$ \quad
Sirui Li$^{2*}$ \quad
Yichen Yang$^{1}$ \quad
Tongfei Chen$^{1}$ \\[0.3em]
Zichao Feng$^{1}$ \quad
Shuwei Shao$^{3}$ \quad
Huobin Tan$^{1\dagger}$ \\[0.6em]
$^{1}$Beihang University \qquad
$^{2}$Chengdu University of Technology \qquad
$^{3}$Nanyang Technological University
}

\iclrfinalcopy 
\begin{document}

\maketitle

\begingroup
\renewcommand{\thefootnote}{\fnsymbol{footnote}}
\footnotetext[1]{Equal contribution.}
\footnotetext[2]{Corresponding author.}
\endgroup

\begin {abstract}
Visual world models enable goal-directed control through decision-time
action search, but their deployment efficiency is often limited by
conservatively large planning budgets.
We show that competitive task performance can be achieved without
agreement with the Full-budget action, that sufficient budgets vary
across model--task pairs, and that iterative planners repeatedly encode
solve-invariant context.
To address these inefficiencies, we propose {SufficientPlan}, a
simple deployment framework that requires no modification to pretrained
world models or planner updates.
Its {Paired Sequential Budget Certification (PSBC)} component uses
paired closed-loop evidence to search for and certify a reduced model--task-specific
budget within a predefined Full-performance tolerance.
Its {Static-Context Reuse (SCR)} component caches observation and
goal representations across search iterations while preserving
candidate-dependent planning and selected actions.
Experiments across multiple world-model backbones and visual-control tasks
show that SufficientPlan substantially reduces search budgets and planning
latency while maintaining competitive control performance.
\end {abstract}

\begin{figure*}[!h]
\centering
\includegraphics[width=1.0\textwidth]{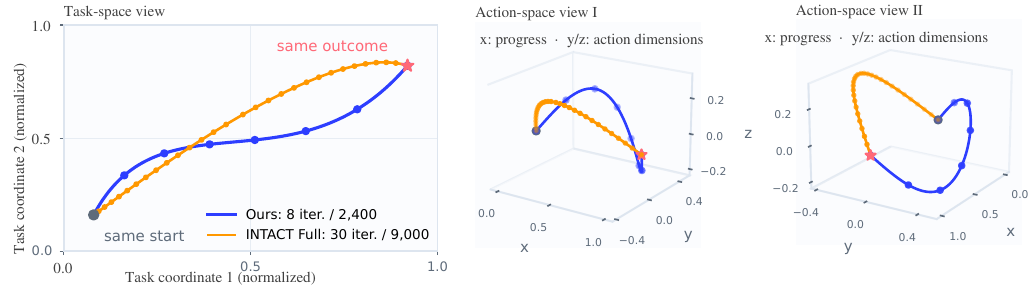}
\caption{
\textbf{Task success without Full-budget action agreement.}
On INTACT Push-T, the calibrated and Full planners follow different
task- and action-space trajectories but both achieve 96\% success, using
2,400 and 9,000 scored sequences per solve, respectively.
}
\label{fig:action_saturation}
\end{figure*}

\section{Introduction}
\vspace{-0.5em}
\label{sec:introduction}

Latent visual world models, including those based on joint-embedding
predictive architectures (JEPAs), predict the effects of candidate actions
in a learned representation space and support goal-directed control
through model-based planning~\citep{dino_wm,lewm,causal_jepa,intact}.
In this paradigm, decision-time search is critical for translating learned
dynamics into effective control, but its computational cost directly
limits deployment efficiency~\citep{cem,mppi,icem}.
Existing world-model planners commonly use a large, globally fixed search
budget for different models and tasks~\citep{pets,dino_wm,lewm,intact}.
Such budgets determine how many candidate action sequences are evaluated
and how extensively they are refined before execution.
However, a fixed computational allowance does not account for whether
additional search meaningfully improves closed-loop task performance. However, we find that this practice introduces two complementary forms of redundancy. First, additional action refinement may continue even when a reduced
budget and Full planning achieve the same aggregate task performance. As shown in
Fig.~\ref{fig:action_saturation}, the calibrated and Full planners follow distinct task- and action-space trajectories but achieve the same 96\% success rate using 2,400 and 9,000 scored sequences, respectively. Moreover, the budget required to reach this outcome-sufficient regime varies considerably across model--task pairs (See Tab.~\ref{tab:planning_budget_success}). Second, iterative planners repeatedly encode the same observation history
and goal during candidate refinement, even though these inputs remain unchanged within a planning solve. As shown in Fig.~\ref{fig:scr_mechanism}, the original planner places observation-history and goal encoding inside the CEM loop, causing the same solve-static inputs to be re-encoded in every iteration and resulting in $2K$ static-encoder calls over $K$ refinement iterations.

To address these problems, we propose SufficientPlan, a simple, effective, and plug-and-play framework for improving the success-compute trade-off of world-model planning without retraining the underlying model or changing its control objective. SufficientPlan contains two complementary components.
First, Paired Sequential Budget Certification (PSBC) evaluates search budget proposals on paired development episodes and certifies a lower model--task-specific budget that remains within a predefined performance tolerance relative to Full planning. As shown in Fig.~\ref{fig:psbc}, PSBC compares each current budget with the Full reference on the same episodes: uncertain cases receive additional paired evidence, insufficient proposals trigger a larger budget, and the first proposal satisfying the certification criterion is frozen for held-out deployment. Unlike action-matching criteria, PSBC directly evaluates task outcomes,
allowing a reduced budget to be certified even when its selected actions
do not agree with those of the Full-budget planner. Second, Static-Context Reuse (SCR) computes the solve-invariant observation and goal representations once and reuses them across all search iterations. As shown in Fig.~\ref{fig:scr_mechanism}, SCR moves these solve-static encoding operations outside the CEM loop, reducing repeated solve-static encoding operations from $2K$ to $2$ over $K$ iterations. SCR preserves candidate-dependent action encoding, latent rollout, cost evaluation, and optimizer updates, thereby reducing repeated computation without changing the planner's selected actions or closed-loop behavior.

Together, PSBC and SCR address two orthogonal sources of planning inefficiency: PSBC removes unnecessary search, while SCR removes redundant computation within the remaining search. Because both components operate entirely at deployment time, SufficientPlan can be applied to existing sampling-based world-model planners as a lightweight plug-in. Experiments across multiple world-model backbones and control benchmarks show that SufficientPlan substantially reduces candidate evaluation and planning latency while preserving competitive closed-loop task performance.
\\Our main contributions are summarized below:
\vspace{-0.5em}

\begin{itemize}[itemsep=-0.3pt, topsep=0pt, leftmargin=15pt]
\item We identify a mismatch between closed-loop task success and
Full-budget action agreement: comparable task performance can be achieved
with different actions, and the sufficient search budget varies
substantially across models and tasks.
\item We introduce SufficientPlan, a simple and plug-and-play planning-efficiency framework combining PSBC, which sequentially searches for and certifies a reduced outcome-sufficient search budget, and SCR, which exactly reuses solve-invariant representations during iterative planning.
\item We evaluate SufficientPlan across multiple world-model backbones and control benchmarks, demonstrating substantial reductions in search computation and planning latency while preserving closed-loop performance.

\end{itemize}

\begin{figure*}[t]
\centering
\includegraphics[width=1.0\textwidth]{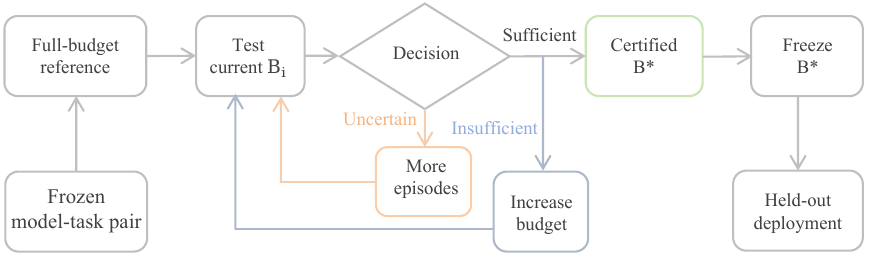}
\vspace{-0.9em}
\caption{
\textbf{Paired Sequential Budget Certification (PSBC).}
PSBC compares the current budget with the full-budget reference on paired calibration episodes. Uncertain cases receive additional paired evidence, insufficient budgets are increased, the first proposal satisfying the certification criterion is returned as
$B^\star$ and frozen for held-out deployment.
}
\label{fig:psbc}
\end{figure*}

\section {Related Work}
\label {sec:related_work}
\vspace {-0.9em}

\noindent\textbf {Latent World Models for Visual Planning.}
World models learn predictive representations of environment dynamics
for decision making~\citep {world_models}.
PlaNet and the Dreamer family perform planning or policy optimization in
compact latent spaces~\citep {planet,dreamer,dreamerv2,dreamerv3}, while
MuZero, EfficientZero, and TD-MPC show that task-relevant latent dynamics
can support control without pixel-perfect future reconstruction
~\citep {muzero,efficientzero,td_mpc2}.
Joint-embedding predictive architectures, including I-JEPA, V-JEPA, and
V-JEPA2, further learn predictive representations without reconstructing
raw pixels~\citep {ijepa,vjepa,vjepa2}.
DINO-WM uses pretrained visual features for goal-conditioned planning
~\citep {dino_wm}; LeWM learns a lightweight world model end-to-end from
pixels, and Fast-LeWM accelerates candidate evaluation through
action-prefix prediction~\citep {lewm,fast_lewm}.
INTACT learns an intent-to-action mapping for direct control and optional
local search~\citep {intact}.
Object-centric methods instead represent scenes as entities and model
their dynamics and interactions
~\citep {slot_attention,savi,slotformer,sold,causal_jepa}.
Unlike these methods, which primarily improve representations, dynamics,
or action priors, SufficientPlan keeps the pretrained model and planner
fixed and reduces decision-time computation.

\noindent\textbf {Efficient Decision-Time Planning.}
Sampling-based optimizers such as CEM and MPPI optimize action sequences
under learned dynamics without requiring differentiable models
~\citep {cem,mppi}.
Methods such as PETS and PDDM apply this principle in receding-horizon
control~\citep {pets,pddm}, but their cost grows with the candidate
population and search depth.
Prior work improves efficiency using learned policies or behavior priors
~\citep {poplin,mbop} or
sample-efficient CEM refinements such as elite reuse and distribution
warm-starting~\citep {icem}.
Adaptive replanning instead reduces how frequently the planner is invoked
across environment steps~\citep {adarep}.
SufficientPlan targets two complementary sources of deployment
overcomputation.
\textbf {Paired Sequential Budget Certification (PSBC)} uses paired
closed-loop evidence to select a model--task-specific search budget,
which is calibrated on a development set and frozen for held-out
deployment.
Unlike adaptive replanning or per-state early stopping, PSBC changes the
budget assigned to each model--task pair without modifying the world
model or planner update rule.
\textbf {Static-Context Reuse (SCR)} encodes the solve-invariant
observation history and goal once and reuses them across refinement
iterations, while preserving candidate evaluation, planner updates, and
the selected action.
Thus, PSBC reduces the amount of deployed search, whereas SCR removes
repeated computation within the retained search.

\begin{figure*}[t]
\centering
\includegraphics[width=1.0\textwidth]{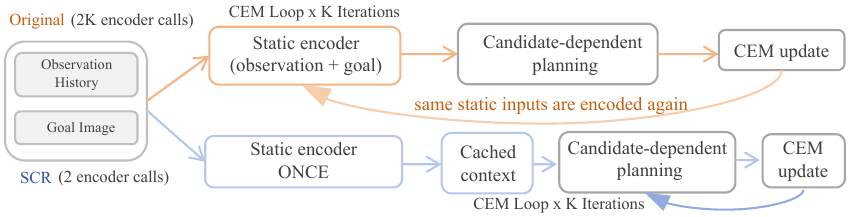}
\vspace{-0.9em}
\caption{
\textbf{Static-Context Reuse (SCR).}
SCR encodes the solve-invariant observation history and goal once and
reuses them across $K$ CEM iterations, reducing cost-stage static-encoder
calls from $2K$ to $2$ without changing candidate-dependent planning or
CEM updates.
}
\label{fig:scr_mechanism}
\end{figure*}

\section{Motivation and Analysis}
\label{sec:motivation}
\vspace{-0.5em}

\noindent\textbf{Outcome Sufficiency without Full-Action Agreement.}
Iterative world-model planners refine candidate actions using predicted
latent costs~\citep{cem,pets,dino_wm,lewm}, but successful control need
not reproduce the Full-budget action.
On INTACT Push-T, the calibrated planner uses 8 CEM iterations and
2,400 scored sequences per solve, compared with 30 iterations and
9,000 sequences for Full planning; both achieve 96\% success on the
development set.
Fig.~\ref{fig:action_saturation} illustrates distinct trajectories
reaching the same goal, while Fig.~\ref{fig:action_vs_success} quantifies
action differences despite equal aggregate success.
Thus, requiring action agreement with Full planning could prolong
search after the target level of closed-loop performance has been reached.
We instead assess budget sufficiency using paired task outcomes.


\noindent\textbf{Model--Task Variation in Certified Budgets.}
The budgets certified by PSBC vary substantially across model--task pairs.
As shown in Tab.~\ref{tab:planning_budget_success}, the selected operating
budgets range from 3.3\% to 80.0\% of Full planning across the evaluated
pairs.
For example, PSBC selects 3.3\% of the Full budget for INTACT on Cube,
but 80.0\% for LeWM on Reacher.
Consequently, a uniformly small budget may underallocate search for
refinement-sensitive pairs, while a uniformly large budget can waste
computation on pairs for which smaller budgets are sufficient
These results motivate model--task-specific budget certification rather
than a universal deployment budget.

\noindent\textbf{Repeated Solve-Static Encoding.}
Reducing the search budget removes unnecessary candidate evaluations, but
repeated computation can remain within the retained iterations.
In the iterative planners examined here, the observation history and goal
remain fixed within a planning solve, whereas only the candidate action
sequences change.
Nevertheless, these solve-static inputs are encoded inside each
candidate-evaluation iteration.
As illustrated in Fig.~\ref{fig:scr_mechanism}, a planner with $K$
iterations consequently performs $2K$ static-encoder calls for only two
distinct inputs.

This repetition produces measurable wall-clock overhead.
On Fast-LeWM/Push-T, the analytical reduction in static-encoder calls
increases from 0\% at one iteration to 96.7\% at 30 iterations.
Repeated timing measurements show no reliable speedup at one iteration,
whereas SCR reaches a $2.45\times$ speedup at 30 iterations
(Fig.~\ref{fig:scr_scaling}).
Matched-budget evaluations across INTACT, Fast-LeWM, and C-JEPA
preserve the corresponding episode-success vectors with zero outcome flips
(Fig.~\ref{fig:scr}).
These measurements show that repeated solve-static encoding contributes
latency without changing the evaluated closed-loop outcomes.

Together, the analysis identifies two complementary sources of deployment
overcomputation:
\[
\underbrace{\text{search beyond outcome sufficiency}}
_{\text{excess candidate evaluation}}
\qquad\text{and}\qquad
\underbrace{\text{repeated solve-static encoding}}
_{\text{redundant per-iteration computation}}.
\]
The first motivates outcome-based, model--task-specific budget
certification, while the second motivates reusing invariant
representations within each planning solve.

\begin{figure*}[t]
\centering
\includegraphics[width=1.0\textwidth]{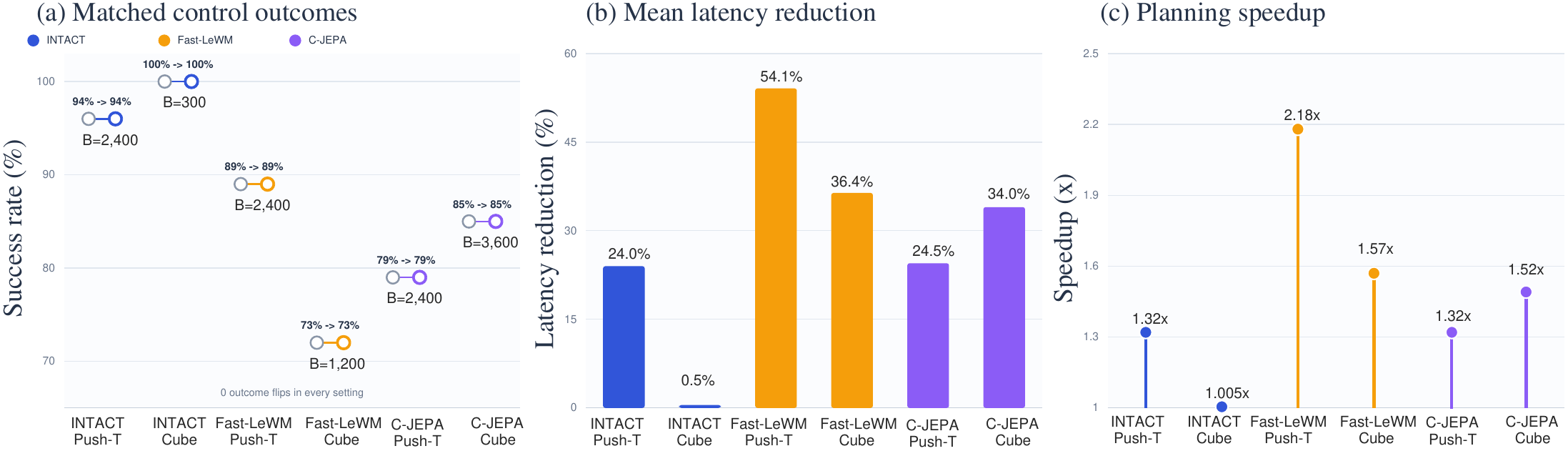}
\vspace{-0.9em}
\caption{
\textbf{Matched-budget evaluation of Static-Context Reuse (SCR).}
(a) Matched-budget success rates and episode-level outcome consistency.
(b) Mean planning-latency reduction.
(c) Planning speedup across backbone--task settings.
SCR preserves all evaluated outcomes while achieving up to
$2.18\times$ speedup.
}
\label{fig:scr}
\end{figure*}

\begin{table*}[t]
\centering
\caption{
Comparison of full-budget planning and PSBC-calibrated planning across different world-model planners and control tasks.
\textbf{Budget} denotes the planning budget relative to the corresponding full planner, and \textbf{Success} denotes the task success rate (\%).
``Full'' indicates the original full planning budget, while ``--'' denotes unavailable results.
}
\label{tab:planning_budget_success}
\small
\setlength{\tabcolsep}{5.5pt}
\renewcommand{\arraystretch}{1.15}
\resizebox{\textwidth}{!}{
\begin{tabular}{ll cc cc cc cc}
\toprule
\multirow{2}{*}{\textbf{Planner}} &
\multirow{2}{*}{\textbf{Deployment}} &
\multicolumn{2}{c}{\textbf{Push-T}} &
\multicolumn{2}{c}{\textbf{Cube}} &
\multicolumn{2}{c}{\textbf{Reacher}} &
\multicolumn{2}{c}{\textbf{TwoRoom}} \\
\cmidrule(lr){3-4}
\cmidrule(lr){5-6}
\cmidrule(lr){7-8}
\cmidrule(lr){9-10}
&
&
\textbf{Budget} & \textbf{Success} &
\textbf{Budget} & \textbf{Success} &
\textbf{Budget} & \textbf{Success} &
\textbf{Budget} & \textbf{Success} \\
\midrule

LeWM
& Full CEM
& Full & 85
& Full & 71
& Full & 82
& Full & 91 \\

LeWM+PSBC
& Calibrated CEM
& 50.0\% & 88
& 13.3\% & 74
& 80.0\% & 82
& 30.0\% & 91 \\

\midrule

Fast-LeWM
& Full CEM
& Full & 95
& Full & 68
& Full & 82
& Full & 94 \\

Fast-LeWM+PSBC
& Calibrated CEM
& 26.7\% & 89
& 13.3\% & 73
& 80.0\% & 81
& 50.0\% & 94 \\

\midrule

PRISM
& Official PoG-MPPI
& Full & 88
& Full & 78
& -- & --
& -- & -- \\

PRISM+PSBC
& Calibrated PoG-MPPI
& 26.7\% & 89
& 6.7\% & 82
& -- & --
& -- & -- \\

\midrule

C-JEPA
& Full CEM
& Full & 82
& Full & 82
& -- & --
& -- & -- \\

C-JEPA+PSBC
& Calibrated CEM
& 26.7\% & 79
& 40.0\% & 85
& -- & --
& -- & -- \\

\midrule

INTACT
& Full Actor-CEM
& Full & 95
& Full & 98
& Full & 85
& Full & 100 \\

INTACT+PSBC
& Calibrated Actor-CEM
& 26.7\% & 94
& 3.3\% & 100
& 3.3\% & 100
& 3.3\% & 100 \\

\midrule

\textbf{SufficientPlan}
& \textbf{Calibrated Actor-CEM + SCR}
& \textbf{26.7\%} & \textbf{94}
& \textbf{3.3\%} & \textbf{100}
& \textbf{3.3\%} & \textbf{100}
& \textbf{3.3\%} & \textbf{100} \\

\bottomrule
\end{tabular}
}
\end{table*}

\begin{figure*}[t]
\centering
\includegraphics[width=1.0\textwidth]{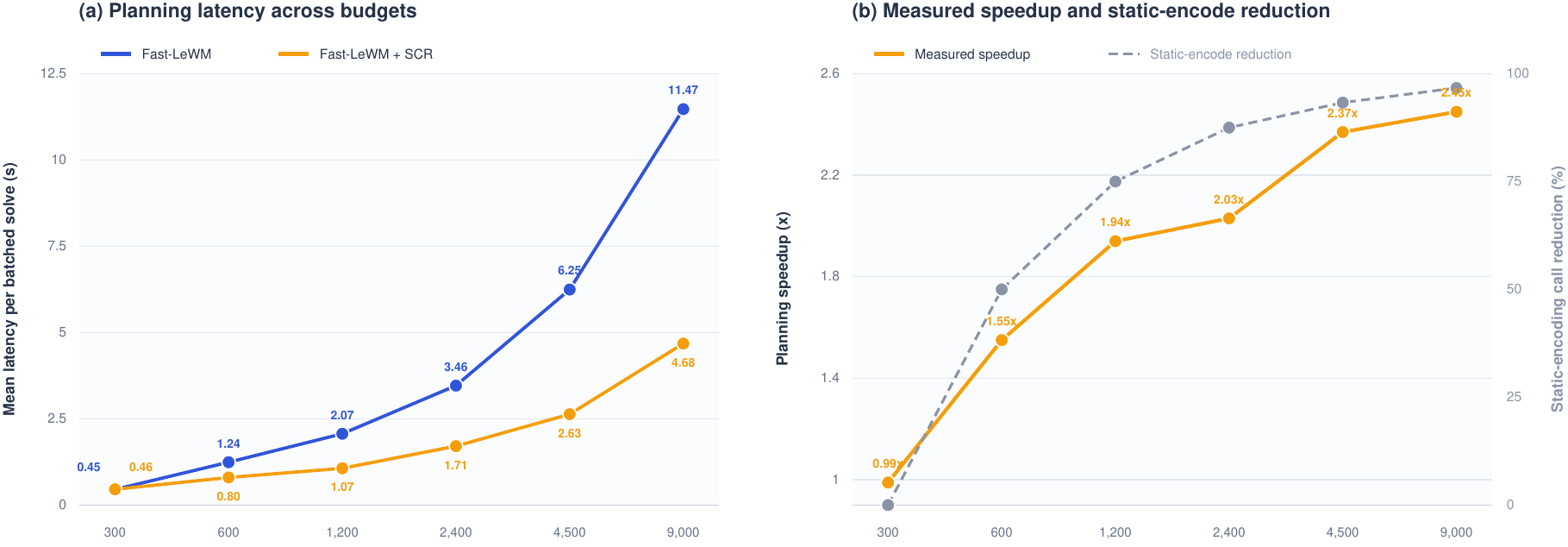}
\vspace{-0.9em}
\caption{
\textbf{SCR scaling with planning depth on Fast-LeWM/Push-T.}
(a) Mean planning latency of the uncached and SCR-enabled planners.
(b) Measured speedup and analytical reduction in solve-static encoder
calls. Repeated measurements show no reliable speedup at one iteration,
whereas SCR reaches $2.45\times$ speedup at 30 iterations as repeated
static encoding becomes more prominent.
}
\label{fig:scr_scaling}
\end{figure*}

\begin{figure*}[t]
\centering
\includegraphics[width=1.0\textwidth]{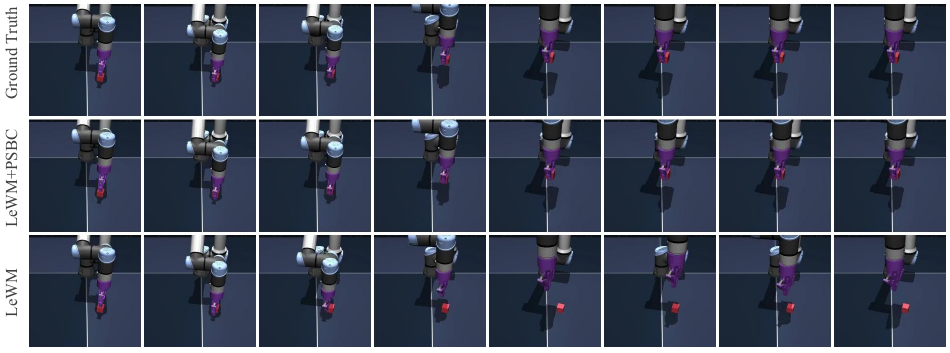}   
\vspace{-0.9em}
\caption{
\textbf{Paired qualitative comparison on Cube.}
From the same initial state and goal, LeWM+PSBC succeeds while
Full LeWM fails; Ground Truth is shown as reference, and columns
progress from left to right.
}
\label{fig:vs}
\end{figure*}

\section{Method}
\label{sec:method}
\vspace{-0.5em}

\noindent\textbf{Framework Overview.}
We propose \textbf{SufficientPlan}, a deployment framework that reduces
decision-time planning computation without retraining the world model or
modifying its native planning objective.
Given a frozen world model \(M\), its planner \(\mathcal{P}\), a
development set \(\mathcal{D}_{\mathrm{dev}}\), the original Full
planning budget \(B_{\mathrm{Full}}\), and an allowed performance
degradation \(\delta\), SufficientPlan first applies
\textbf{Paired Sequential Budget Certification (PSBC)} to identify a
lower model--task-specific operating budget.
The resulting budget is frozen before held-out evaluation.
During planning, \textbf{Static-Context Reuse (SCR)} further removes
repeated observation-history and goal encoding within each solve.
Thus, PSBC reduces candidate search, whereas SCR eliminates redundant
computation inside the retained search.

\noindent\textbf{Paired Outcome Formulation.}
Let \(Y_e(B)\in\{0,1\}\) denote the binary closed-loop outcome obtained on
development episode \(e\) using planning budget \(B\).
For each tested budget proposal \(B_i\), PSBC evaluates \(B_i\) and
\(B_{\mathrm{Full}}\) on the same episode, initial state, and goal.
Their paired outcome difference is
\begin{equation}
D_e(B_i)
=
Y_e(B_{\mathrm{Full}})
-
Y_e(B_i)
\in\{-1,0,1\}.
\label{eq:paired_outcome}
\end{equation}
Here, \(D_e(B_i)=1\) denotes a \emph{broken} episode that succeeds under
Full planning but fails under \(B_i\), whereas \(D_e(B_i)=-1\) denotes a
\emph{rescued} episode that fails under Full planning but succeeds under
\(B_i\).
A value of zero indicates identical binary outcomes.

The population-level degradation and its estimate from \(n\) paired
development episodes are
\begin{equation}
\begin{aligned}
\Delta(B_i)
&=
\mathbb{E}\!\left[D_e(B_i)\right]
=
\mathrm{SR}(B_{\mathrm{Full}})
-
\mathrm{SR}(B_i), \\
\widehat{\Delta}_{i,n}
&=
\frac{1}{n}
\sum_{e=1}^{n} D_e(B_i).
\end{aligned}
\label{eq:paired_degradation}
\end{equation}
Although \(\widehat{\Delta}_{i,n}\) equals the difference between the two
empirical success rates, retaining the episode-level pairing preserves
their covariance and supports paired uncertainty quantification.

\noindent\textbf{Sequential Budget Certification.}
Let \(\mathcal{B}=\{B_1,\ldots,B_m\}\) denote a predefined sequence of
candidate budgets in the lower-budget regime, and let
\(\mathcal{N}\) denote the permitted sequential inspection times.
For each \(B_i\) and \(n\in\mathcal{N}\), let
\(\mathcal{D}_{i,n}=\{D_e(B_i)\}_{e=1}^{n}\) denote the observed paired
outcomes.
PSBC computes the posterior probability that the population degradation
of \(B_i\) lies within the allowed tolerance:
\begin{equation}
q_{i,n}
=
\Pr\!\left(
    \Delta(B_i)\leq\delta
    \mid
    \mathcal{D}_{i,n}
\right).
\label{eq:psbc_posterior_probability}
\end{equation}
The posterior is constructed from the rescued, broken, and unchanged
episode counts under a Dirichlet model.
The full posterior construction is provided in the supplementary
material.

Let \(\tau_{\mathrm{acc}}\) and \(\tau_{\mathrm{rej}}\) denote the
acceptance and rejection thresholds.
PSBC applies
\begin{equation}
\operatorname{Decision}(B_i,n)
=
\begin{cases}
\textsc{Sufficient},
& q_{i,n}\geq\tau_{\mathrm{acc}},\\[2pt]
\textsc{Insufficient},
& 1-q_{i,n}\geq\tau_{\mathrm{rej}},\\[2pt]
\textsc{Uncertain},
& \text{otherwise}.
\end{cases}
\label{eq:psbc_decision}
\end{equation}
A sufficient proposal is certified.
An uncertain proposal receives additional paired episodes at the same
budget, whereas an insufficient proposal is replaced by the next budget
in the proposal sequence.
For each additional episode, the corresponding Full-budget outcome is
evaluated or retrieved from a precomputed reference, preserving the
paired comparison.
If the development episodes are exhausted before a proposal can be
certified, PSBC proceeds conservatively to the next proposal and
ultimately falls back to \(B_{\mathrm{Full}}\) if no reduced budget is
certified.
PSBC thus certifies search sufficiency from paired closed-loop outcomes
rather than action agreement (Fig.~\ref{fig:psbc}).

\noindent\textbf{Posterior Certification Guarantee.}
The posterior decision rule in Eq.~\ref{eq:psbc_decision} provides a
direct probabilistic interpretation for every budget accepted by PSBC.

\begin{proposition}[Posterior certification criterion]
\label{prop:psbc_validity}
If PSBC certifies a budget \(B^\star=B_i\) at inspection time \(n\), then
\begin{equation}
\Pr\!\left(
    \Delta(B^\star)\leq\delta
    \mid
    \mathcal{D}_{i,n}
\right)
\geq
\tau_{\mathrm{acc}}.
\label{eq:psbc_certification_guarantee}
\end{equation}
\end{proposition}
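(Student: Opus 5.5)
The argument is a direct unwinding of the definitions: the certification event coincides with the \textsc{Sufficient} branch of Eq.~\ref{eq:psbc_decision}. I would proceed in three steps.

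\emph{Step 1: pin down what ``certifies'' means procedurally.} PSBC returns a reduced budget $B^\star=B_i$ at inspection time $n$ only when $\operatorname{Decision}(B_i,n)=\textsc{Sufficient}$. The other two branches do not return anything. An \textsc{Uncertain} decision only adds paired episodes at the same budget. An \textsc{Insufficient} decision, or exhausting the development episodes, moves to the next proposal or falls back to $B_{\mathrm{Full}}$. Since the fallback is not a certification of a reduced budget, the statement concerns only budgets returned through the \textsc{Sufficient} branch. I would also note that the cases in Eq.~\ref{eq:psbc_decision} are consistent as long as $\tau_{\mathrm{acc}}+\tau_{\mathrm{rej}}>1$. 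If they could overlap, I would read the displayed ordering as giving \textsc{Sufficient} priority. Either way, the \textsc{Sufficient} label is assigned only when $q_{i,n}\ge\tau_{\mathrm{acc}}$.

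\emph{Step 2: identify the quantity being bounded.} Substituting $B^\star=B_i$ into Eq.~\ref{eq:psbc_posterior_probability} shows that the left-hand side of Eq.~\ref{eq:psbc_certification_guarantee} is exactly $q_{i,n}$. Here I would check that the conditioning set $\mathcal{D}_{i,n}$ is the same one used when the decision was made. Both are the $n$ paired differences observed at budget $B_i$, including any Full outcomes retrieved from the precomputed reference. The posterior is well defined because it is the Dirichlet posterior over the rescued, broken, and unchanged probabilities $(p_{-1},p_0,p_{+1})$. Under that model $\Delta(B_i)=p_{+1}-p_{-1}$, so $\{\Delta\le\delta\}$ is a measurable event in the parameter space.

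\emph{Step 3: combine.} Together, Steps 1 and 2 give $\Pr(\Delta(B^\star)\le\delta\mid\mathcal{D}_{i,n})=q_{i,n}\ge\tau_{\mathrm{acc}}$, which is the claim.

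The only real subtlety is the sequential aspect. The data-dependent stopping time and the earlier rejected proposals might seem to invalidate the conditional statement. I would address this by stressing that the claim is a posterior statement conditional on the observed data at the stopping point, not a frequentist coverage guarantee. Under the likelihood principle, a Bayesian posterior is unaffected by the optional-stopping rule and by the inspection of other proposals, provided each proposal's posterior uses only its own paired data. Stating this assumption explicitly is the main care point. The algebra itself is immediate.
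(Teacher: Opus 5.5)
Your proposal is correct and matches the paper's proof: certification is exactly the \textsc{Sufficient} branch, so $q_{i,n}\ge\tau_{\mathrm{acc}}$, and $q_{i,n}$ is by definition the posterior probability on the left-hand side. Your remarks on threshold consistency and optional stopping are sound context, in line with the paper's caveat that this is a posterior rather than a frequentist statement, but the argument does not need them.
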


\begin{proof}
Certification requires
\(q_{i,n}\geq\tau_{\mathrm{acc}}\).
By the definition of \(q_{i,n}\) in
Eq.~\ref{eq:psbc_posterior_probability},
\begin{equation}
q_{i,n}
=
\Pr\!\left(
    \Delta(B^\star)\leq\delta
    \mid
    \mathcal{D}_{i,n}
\right),
\end{equation}
which directly gives
Eq.~\ref{eq:psbc_certification_guarantee}.
\end{proof}

Proposition~\ref{prop:psbc_validity} is a posterior statement under the
specified paired-outcome model.
It does not imply simultaneous frequentist coverage over all budgets and
inspection times, nor does it guarantee identical empirical success rates
on every finite held-out sample.

\noindent\textbf{Budget Proposal and Deployment.}
We define planning budget as the total number of candidate action
sequences scored in one planning solve.
If a planner evaluates \(N_k\) candidates at refinement iteration \(k\),
the budget of proposal \(B_i\) is
\begin{equation}
B_i
=
\sum_{k=1}^{K_i}N_k,
\qquad
B_i=NK_i
\quad\text{when }N_k=N.
\label{eq:planning_budget}
\end{equation}
The second expression applies to the fixed-population CEM configurations
used in our experiments.

PSBC examines a predefined proposal sequence beginning in the
lower-budget regime and returns the first proposal classified as
\textsc{Sufficient}, denoted by \(B^\star\).
This defines a certified operating point rather than a globally minimal
budget, and does not assume monotonic closed-loop performance with
increasing planning budget.

After calibration, the deployment budget is frozen as
\begin{equation}
B_{\mathrm{deploy}}
=
B^\star.
\label{eq:frozen_budget}
\end{equation}
Held-out outcomes are never used to revise this budget.
PSBC changes only the amount of deployed search; the world-model
parameters, planning objective, per-iteration candidate-generation rule,
cost function, and optimizer update remain unchanged.

\noindent\textbf{Static-Context Reuse.}
PSBC reduces candidate evaluation, but repeated computation may remain
within each retained planning solve.
Let \(h\) denote the current observation history, \(g\) the goal
observation, and \(\mathcal{A}^{(k)}\) the candidate action population at
refinement iteration \(k\).
A conventional latent planner evaluates this population through
\begin{equation}
J_{\mathrm{orig}}^{(k)}
=
\mathcal{C}\!\left(
    E_h(h),
    E_g(g),
    E_a(\mathcal{A}^{(k)})
\right),
\label{eq:original_cost}
\end{equation}
where \(E_h\) and \(E_g\) encode the observation history and goal,
\(E_a\) encodes candidate actions, and \(\mathcal{C}\) includes latent
prediction and terminal-cost evaluation.

Within one planning solve, \(h\) and \(g\) remain fixed across all \(K\)
refinement iterations, whereas \(\mathcal{A}^{(k)}\) changes.
Nevertheless, Eq.~\ref{eq:original_cost} recomputes \(E_h(h)\) and
\(E_g(g)\) at every iteration.
As illustrated in Fig.~\ref{fig:scr_mechanism}, SCR moves these
solve-invariant operations outside the iterative loop:
\begin{equation}
z_h=E_h(h),
\qquad
z_g=E_g(g).
\label{eq:scr_cache}
\end{equation}
The cached representations are then reused in every candidate evaluation:
\begin{equation}
J_{\mathrm{SCR}}^{(k)}
=
\mathcal{C}\!\left(
    z_h,
    z_g,
    E_a(\mathcal{A}^{(k)})
\right).
\label{eq:scr_cost}
\end{equation}

The cache is local to one planning solve and is discarded before the next
environment step.
SCR does not cache candidate-dependent quantities: candidate sampling,
action encoding, latent rollout, terminal-cost evaluation, and optimizer
updates are still executed for every candidate population.
SCR therefore changes the placement of solve-static computation rather
than the mathematical definition of candidate evaluation.

\noindent\textbf{Computation Reduction and Planning Equivalence.}
For the planners considered here, each refinement iteration performs one
observation-history encoding and one goal encoding.
The original planner therefore performs \(2K\) solve-static encoding
operations over \(K\) iterations, whereas SCR performs only two.
The resulting encoder-call reduction is
\begin{equation}
R_{\mathrm{enc}}(K)
=
1-\frac{2}{2K}
=
1-\frac{1}{K}.
\label{eq:encoding_reduction}
\end{equation}
For example, Eq.~\ref{eq:encoding_reduction} gives an 87.5\% reduction
for \(K=8\) and a 96.7\% reduction for \(K=30\).
This quantity measures the reduction in solve-static encoder calls, not
the reduction in total planner FLOPs, latency, or candidate evaluations.

Under deterministic evaluation-mode encoders, identical initial optimizer
and random states, and unchanged candidate-dependent computation,
Eqs.~\ref{eq:original_cost} and \ref{eq:scr_cost} use identical
observation and goal representations.
Consequently,
\begin{equation}
J_{\mathrm{SCR}}^{(k)}
=
J_{\mathrm{orig}}^{(k)}
\qquad
\forall k\in\{1,\ldots,K\}.
\label{eq:scr_cost_equivalence}
\end{equation}
Applying the same optimizer update to identical candidate costs preserves
the subsequent optimizer states and sampling distributions, yielding
\begin{equation}
a_{\mathrm{SCR}}^\star
=
a_{\mathrm{orig}}^\star.
\label{eq:scr_action_equivalence}
\end{equation}
The complete induction argument and numerical equivalence tests are
provided in the supplementary material.
SCR therefore reduces solve-static computation while preserving
fixed-budget control behavior.

\noindent\textbf{Combined Deployment.}
Given the certified operating budget \(B^\star\), SufficientPlan executes
the original planner with SCR enabled:
\begin{equation}
a_t
=
\mathcal{P}_{\mathrm{SCR}}
\left(
    M,h_t,g;B^\star
\right).
\label{eq:sufficientplan_deployment}
\end{equation}
Neither component retrains the world model or modifies the planning
objective, candidate-generation rule, or optimizer update.

\begin{table*}[t]
\centering
\caption{
\textbf{Incremental ablation on the frozen Push-T development set.}
PSBC reduces the deployed search budget while retaining near-Full success,
and SCR preserves the resulting budget and control outcome.
}
\label{tab:psbc_ablation}
\small
\setlength{\tabcolsep}{7pt}
\renewcommand{\arraystretch}{1.12}

\resizebox{\textwidth}{!}{
\begin{tabular}{llcccccc}
\toprule
\textbf{Backbone}
& \textbf{Configuration}
& \textbf{PSBC}
& \textbf{SCR}
& \textbf{Scored Seq./Solve}
& \textbf{Budget (\% Full)}
& \textbf{SR (\%)}
& $\boldsymbol{\Delta}$\textbf{SR (pp)}
\\
\midrule

\multirow{3}{*}{INTACT}
& Full Actor-CEM
& \xmark
& \xmark
& 9,000
& 100.0
& 96
& 0
\\

& + PSBC
& \cmark
& \xmark
& 2,400
& 26.7
& 96
& 0
\\

& + PSBC + SCR
& \cmark
& \cmark
& 2,400
& 26.7
& 96
& 0
\\

\midrule

\multirow{3}{*}{Fast-LeWM}
& Full CEM
& \xmark
& \xmark
& 9,000
& 100.0
& 90
& 0
\\

& + PSBC
& \cmark
& \xmark
& 2,400
& 26.7
& 89
& $-1$
\\

& + PSBC + SCR
& \cmark
& \cmark
& 2,400
& 26.7
& 89
& $-1$
\\

\bottomrule
\end{tabular}
}
\end{table*}

\section{Experiments}
\label{sec:experiments}
\vspace{-0.3em}

We evaluate whether PSBC reduces deployment search across model--task
pairs, whether the calibrated budgets transfer to held-out episodes, and
whether SCR reduces planning latency without changing control outcomes.
Implementation details, numerical equivalence tests, and additional
analyses are provided in the supplementary material.

\subsection{Experimental Setup}
\label{sec:experimental_setup}

\noindent\textbf{Backbones and tasks.}
We evaluate SufficientPlan on LeWM~\citep{lewm},
Fast-LeWM~\citep{fast_lewm}, PRISM~\citep{PRISM},
C-JEPA~\citep{causal_jepa}, and INTACT~\citep{intact}.
We retain their native planners: CEM for LeWM, Fast-LeWM, and C-JEPA;
PoG-MPPI for PRISM; and Actor-CEM for INTACT.
The evaluation covers Push-T~\citep{pusht}, Cube~\citep{cube}, Reacher~\citep{reacher}, and TwoRoom~\citep{tworoom}.
All checkpoints are frozen during PSBC calibration and held-out
evaluation; SufficientPlan requires no additional world-model training
or fine-tuning.

\noindent\textbf{Protocol and metrics.}
PSBC calibrates each model--task pair on a frozen development set by
comparing candidate budgets with Full planning on paired episodes.
The resulting budget $B^\star$ is frozen before episode-disjoint held-out
evaluation and is not adjusted using held-out outcomes.
We report closed-loop success rate (SR), budget relative to Full planning,
and
\[
\Delta \mathrm{SR}
=
\mathrm{SR}(B^\star)-\mathrm{SR}(B_{\mathrm{Full}}).
\]
For SCR, we additionally report matched-budget planning latency, speedup,
and episode-level outcome flips.

\subsection{Budget Certification}
\label{sec:budget_results}

Tab.~\ref{tab:planning_budget_success} compares Full planning with
PSBC-calibrated deployment across 16 backbone--task pairs.
On average, PSBC uses only 30.0\% of the corresponding Full budget,
reducing candidate evaluations by 70.0\%.
In 14 of the 16 held-out configurations, the resulting performance is
no more than 2 percentage points below Full planning.

The certified budgets vary substantially across model--task pairs,
ranging from 3.3\% to 80.0\% of Full planning.
For example, PSBC selects 13.3\% for LeWM on Cube but 80.0\% on Reacher,
highlighting the inefficiency of a globally fixed search budget.
The same calibration procedure applies to CEM, PoG-MPPI, and Actor-CEM
without modifying their native optimizer updates.

Fig.~\ref{fig:vs} further shows a paired qualitative example on Cube,
where LeWM+PSBC succeeds from the same initial condition for which
Full LeWM fails.
The main held-out degradations occur on Fast-LeWM/Push-T and
C-JEPA/Push-T, with decreases of 6 and 3 percentage points, respectively.
These results are reported without held-out recalibration, indicating
that PSBC improves the empirical performance--computation trade-off
without assuming that reduced-budget planning must match Full planning
on every model--task pair.

\subsection{Static-Context Reuse}
\label{sec:scr_results}

Fig.~\ref{fig:scr} compares SCR with uncached planning under matched
budgets, checkpoints, episodes, candidate samples, and optimizer
configurations.
Across the evaluated settings, SCR preserves the episode-success vectors
with zero outcome flips while achieving up to $2.18\times$ planning
speedup.
The gain varies across settings because the fraction of runtime spent on
repeated static encoding depends on the backbone, task, and search depth.

Fig.~\ref{fig:scr_scaling} evaluates Fast-LeWM/Push-T over budgets from
300 to 9,000 scored sequences.
As refinement depth increases from 1 to 30 iterations, the analytical
reduction in static-encoder calls grows from 0\% to 96.7\%.
Repeated measurements show no reliable speedup at one iteration, where
no cross-iteration reuse is available, but SCR reaches $2.45\times$
speedup at 30 iterations.

\subsection{Further analysis of PSBC}
Appendix~\ref{sec:additional_psbc_analysis} further compares PSBC with
empirical and fixed-sample posterior selection and analyzes its sequential
inspection behavior.
In the PRISM/Cube case study, paired posterior certification rejects a
budget that would be accepted from empirical success rates alone, while
early rejection reduces candidate-side calibration evaluations.
The sensitivity analysis further shows that stricter certification
settings require more evidence or leave the tested proposal uncertified
within the available calibration horizon.

\subsection{Ablation Study}
\label{sec:ablation}

Tab.~\ref{tab:psbc_ablation} incrementally evaluates PSBC and SCR on the
frozen Push-T development set.
For INTACT, PSBC reduces the search budget from 9,000 to 2,400 scored
sequences while preserving the 96\% success rate.
For Fast-LeWM, it reduces the same budget to 26.7\% of Full planning with
a 1-point decrease in success rate.
Adding SCR leaves the selected budget and success outcomes unchanged for
both backbones, as expected from its behavior-preserving computation reuse.
Its latency benefits are reported separately in Fig.~\ref{fig:scr}.

\section{Conclusion}
\label{sec:conclusion}
\vspace{-0.3em}

In this work, we show that world-model planners can produce different
actions under reduced and Full planning budgets despite achieving the
same aggregate task performance, and that they can repeatedly encode
solve-invariant observations and goals.
To address these two sources of overcomputation, we propose {SufficientPlan}, a simple yet effective deployment framework that
requires neither world-model retraining nor changes to the native planner.
PSBC uses paired closed-loop evidence to certify a lower
model--task-specific search budget, while SCR reuses solve-static
representations across planning iterations.
Experiments across multiple backbones, planners, and visual-control tasks
show that SufficientPlan substantially reduces candidate evaluations and
planning latency while retaining competitive control performance.
These results highlight outcome sufficiency and computation reuse as
complementary principles for efficient world-model planning.

\section*{AI Use Statement}
\label{sec:ai_use}
AI tools were used to polish the language and improve readability.
The authors take full responsibility for the paper and its results.

\section*{Ethics Statement}
This work evaluates planning efficiency on simulated visual-control
benchmarks. It does not involve human-subject experiments or the
collection of personal data. We do not anticipate direct ethical risks
from the reported simulation experiments. Any physical deployment should
follow the safety requirements of the intended application.

\section*{Reproducibility Statement}
We provide detailed descriptions of the model architecture, optimization objectives, training schedule, hyperparameters, datasets, and evaluation protocols in the main paper and supplementary material. All experiments use consistent settings unless otherwise specified. We plan to release the implementation and configuration files to facilitate reproduction of the reported results.

\bibliography{iclr2027_conference}
\bibliographystyle{iclr2027_conference}

\newpage
\appendix

\renewcommand\thefigure{\Alph{figure}} 
\setcounter{figure}{0}
\renewcommand\thetable{\Alph{table}} 
\setcounter{table}{0}
\renewcommand{\thesection}{\Alph{section}}
\setcounter{section}{0}
\definecolor{myorange}{RGB}{230,145,56}

\begin{tcolorbox}[
    enhanced,
    breakable,
    colback=suppbg,
    colframe=suppborder,
    boxrule=0.4pt,
    arc=3mm,
    left=4mm,
    right=4mm,
    top=3mm,
    bottom=3mm
]

\begin{center}
{\Large\bfseries Supplementary Material}
\end{center}

\vspace{0.3em}

\noindent
\textbf{This supplementary material is organized as follows:}

\begin{itemize}
\setlength{\itemsep}{1pt}
\setlength{\parsep}{0pt}
\setlength{\parskip}{0pt}
\setlength{\topsep}{3pt}

\item Appendix~\ref{sec:implementation_details} provides dataset,
implementation, calibration, and evaluation details.

\item Appendix~\ref{app:theoretical_analysis} presents the posterior
construction of PSBC and the theoretical analysis of SCR.

\item Appendix~\ref{sec:supplementary_experiments} reports additional
experimental results, including model--task budget variation,
multi-seed evaluation, deployment ablations, qualitative comparisons,
and numerical verification of SCR.

\item Appendix~\ref{sec:additional_psbc_analysis} provides further
analysis of PSBC, including comparisons with simpler selection criteria,
sequential inspection efficiency, and sensitivity to certification
parameters.

\item Appendix~\ref{sec:limitations} discusses the scope and limitations
of SufficientPlan.


\end{itemize}

\end{tcolorbox}

\vspace{0.8em}

\section{Implementation Details}
\label{sec:implementation_details}
\subsection{Datasets and Evaluation Protocol}

We evaluate SufficientPlan on four visual-control tasks: Push-T, Cube,
Reacher, and TwoRoom.
Push-T requires a planar agent to move a T-shaped object to a target
configuration, Cube evaluates vision-based robotic manipulation, Reacher
requires controlling a planar arm toward a target, and TwoRoom evaluates
goal-directed navigation across connected rooms.
We use the dataset and preprocessing protocol associated with each
world-model implementation.

For every model--task pair, we construct a frozen development split for
budget calibration and an episode-disjoint held-out split for final
evaluation.
Unless otherwise stated, each calibration uses at most 100 development
episodes, and each final evaluation uses 100 episode-disjoint held-out
episodes.
PSBC uses only the development split to determine the deployment budget.
The resulting budget is fixed before held-out evaluation and is not
adjusted according to held-out outcomes.
Within every paired comparison, the candidate-budget and Full planners
are evaluated using the same episodes, initial states, and goals.

We report episode-level success rate (SR), with
\begin{equation}
\Delta\mathrm{SR}
=
\mathrm{SR}(B^\star)
-
\mathrm{SR}(B_{\mathrm{Full}}),
\label{eq:implementation_delta_sr}
\end{equation}
where $B^\star$ is the budget returned by PSBC and
$B_{\mathrm{Full}}$ is the corresponding Full-planning budget.
Both success rates in Eq.~\eqref{eq:implementation_delta_sr} are
computed on the same evaluation split.
A negative $\Delta\mathrm{SR}$ indicates a decrease relative to Full
planning, whereas a positive value indicates higher empirical success
under the calibrated budget.

For the multi-seed evaluation, we use evaluation seeds
$\{0,1,42\}$.
We report the mean success rate across the three seeds, with error bars
showing the sample standard deviation.
The development and held-out evaluations remain separated within each
seed.

\begin{figure*}[t]
\centering
\includegraphics[width=1.0\textwidth]
{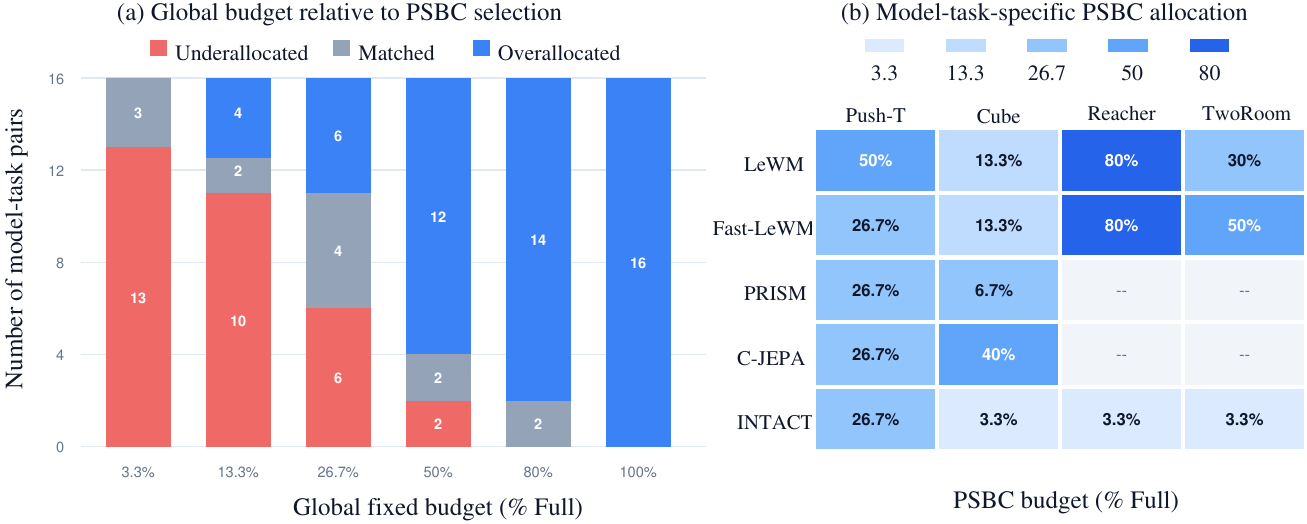}
\vspace{-0.8em}
\caption{
\textbf{Global fixed budgets versus PSBC-selected budgets.}
(a) Numbers of model--task pairs for which each global budget is below,
equal to, or above the PSBC-selected budget.
(b) Selected budget fractions across the 16 evaluated pairs.
}
\label{fig:global_budget_vs_psbc}
\end{figure*}

\subsection{Backbones and Pretrained Checkpoints}

We evaluate LeWM~\citep{lewm}, Fast-LeWM~\citep{fast_lewm},
PRISM~\citep{PRISM}, C-JEPA~\citep{causal_jepa}, and
INTACT~\citep{intact}.
We use officially released pretrained checkpoints whenever they are
available.
For model--task combinations without released checkpoints, we train the
corresponding model using the official implementation and recommended
training configuration.
In particular, the C-JEPA model evaluated on Cube is trained using the
official C-JEPA training protocol because a pretrained checkpoint for
this setting is not provided.
All evaluated checkpoints are frozen during budget calibration and
held-out evaluation; SufficientPlan introduces no additional world-model
training or fine-tuning.

We retain the native planner of each backbone.
LeWM, Fast-LeWM, and C-JEPA use CEM, PRISM uses PoG-MPPI, and INTACT uses
Actor-CEM with its pretrained action prior.
Except for the search budget selected by PSBC and the computation reuse
introduced by SCR, the original planner configuration and update rules
remain unchanged.

\begin{table}[t]
\centering
\caption{
\textbf{Deployment efficiency on PRISM/Cube.}
PSBC reduces per-episode planning time relative to Full planning.
}
\label{tab:deployment_efficiency}
\small
\setlength{\tabcolsep}{7pt}
\renewcommand{\arraystretch}{1.10}

\begin{tabular}{lr}
\toprule
\textbf{Quantity} & \textbf{Value} \\
\midrule
Full planning budget & 9,000 \\
PSBC deployment budget & 600 \\
Full time per episode & 34.28 s \\
PSBC time per episode & 10.93 s \\
Time saving per episode & 23.35 s \\
\bottomrule
\end{tabular}
\end{table}

\subsection{Planning and Evaluation Settings}

We define the planning budget as the number of candidate action sequences
scored during one planning solve.
For CEM-based planners with a population of \(N\) candidates and \(K\)
refinement iterations, the budget is
\begin{equation}
B=NK.
\label{eq:planning_budget_implementation}
\end{equation}
The corresponding relative budget is
\(100B/B_{\mathrm{Full}}\).

In our fixed-population CEM experiments, we use \(N=300\) candidates per
iteration and vary the number of refinement iterations.
The feasible CEM budget grid is
\[
\mathcal{B}_{\mathrm{CEM}}
=
\{300,600,\ldots,9000\},
\]
corresponding to \(K\in\{1,\ldots,30\}\).
For example, 8 iterations correspond to 2,400 scored sequences, whereas
the 30-iteration Full planner scores 9,000 sequences.
For PoG-MPPI, candidate budgets follow the native sampling
parameterization of the original planner, while the reported budget
continues to denote the total number of candidate action sequences scored
per solve.
Planner-specific parameters other than the evaluated search budget,
including the planning horizon, candidate-generation procedure, elite
selection, and optimizer update, are inherited from the corresponding
native implementation.

\noindent\textbf{PSBC calibration settings.}
For all model--task pairs, we set the allowed success-rate degradation to
\(\delta=0.02\), corresponding to a tolerance of two percentage points
relative to Full planning.
For a fixed model--task pair and candidate budget, paired episode outcomes
are modeled as exchangeable categorical draws over rescued, broken, and
unchanged outcomes.
We use the symmetric
\(\operatorname{Dirichlet}
(\tfrac{1}{2},\tfrac{1}{2},\tfrac{1}{2})\)
prior and estimate the posterior non-inferiority probability using
\(10^5\) Monte Carlo samples.
Posterior sampling uses deterministic seeds fixed before calibration so
that repeated analysis of the same paired outcomes produces the same
decision.

PSBC inspects the posterior after
\[
\mathcal{N}
=
\{10,20,40,60,80,100\}
\]
paired calibration episodes and uses
\(\tau_{\mathrm{acc}}=\tau_{\mathrm{rej}}=0.95\).
A proposal is certified when its posterior probability of remaining
within the tolerance is at least 0.95, rejected when the posterior
probability of exceeding the tolerance is at least 0.95, and otherwise
retained as uncertain for the next inspection point.

For the fixed-population CEM planners, PSBC considers the feasible
budgets in ascending order,
\[
300,600,\ldots,9000.
\]
An uncertain proposal receives additional paired episodes up to the next
inspection point.
If it remains unresolved after the final inspection point, or is
classified as insufficient, PSBC advances to the next larger proposal.
For PoG-MPPI, the same lower-to-higher policy is applied to the feasible
budgets induced by its native sampling parameterization.
The first proposal satisfying the certification criterion is returned as
\(B^\star\).
This is the first certified proposal under the predefined policy rather
than a claim of global minimality over all feasible planning budgets.
If no reduced proposal is certified within the available development
episodes, PSBC falls back to \(B_{\mathrm{Full}}\).

The tolerance \(\delta\) applies to the posterior population-level
degradation under the development distribution.
It is not a hard constraint on the empirical success-rate difference
observed in every finite held-out sample.
Accordingly, finite held-out evaluations can occasionally exhibit a
degradation larger than two percentage points due to sampling variation
or development-to-held-out variation, without any held-out recalibration.

\noindent\textbf{SCR evaluation settings.}
We evaluate SCR against the corresponding uncached planner under matched
planning budgets, checkpoints, episodes, candidate samples, and optimizer
configurations.
The SCR cache is constructed once at the beginning of each planning solve,
reused across its refinement iterations, and discarded before the next
environment step.
Cache construction is included in the reported planning latency.
Candidate sampling, candidate-action encoding, latent rollout,
terminal-cost evaluation, and optimizer updates remain inside the timed
planning computation and are not cached.

All reported latency experiments are conducted on a single NVIDIA RTX
A5000 GPU per run.
We report mean solver wall-clock time per batched planning solve, latency
reduction, speedup, and episode-level outcome flips.
The timed region covers the complete planner solve and excludes dataset
loading, environment rendering, metric aggregation, and video generation.
The uncached and SCR-enabled configurations use the same hardware,
checkpoint, evaluation episodes, candidate budget, and numerical
precision.

For the Fast-LeWM/Push-T budget-scaling experiment, each configuration is
evaluated on the same 20 paired episodes.
To reduce sensitivity to execution order, we use an order-balanced
uncached--SCR--SCR--uncached measurement sequence and aggregate the
repeated measurements for each method.
The reported timing protocol does not assume that the analytical
encoder-call reduction is equal to the measured latency reduction.

Numerical equivalence is evaluated separately at the Full
9,000-sequence budget using matched candidate samples and
random-number-generator states.
We compare candidate populations, candidate costs, CEM distribution
updates, selected actions, solver random-number-generator states, model
buffers, and the total number of scored candidate sequences.

\begin{figure*}[t]
\centering
\includegraphics[width=1.0\textwidth]{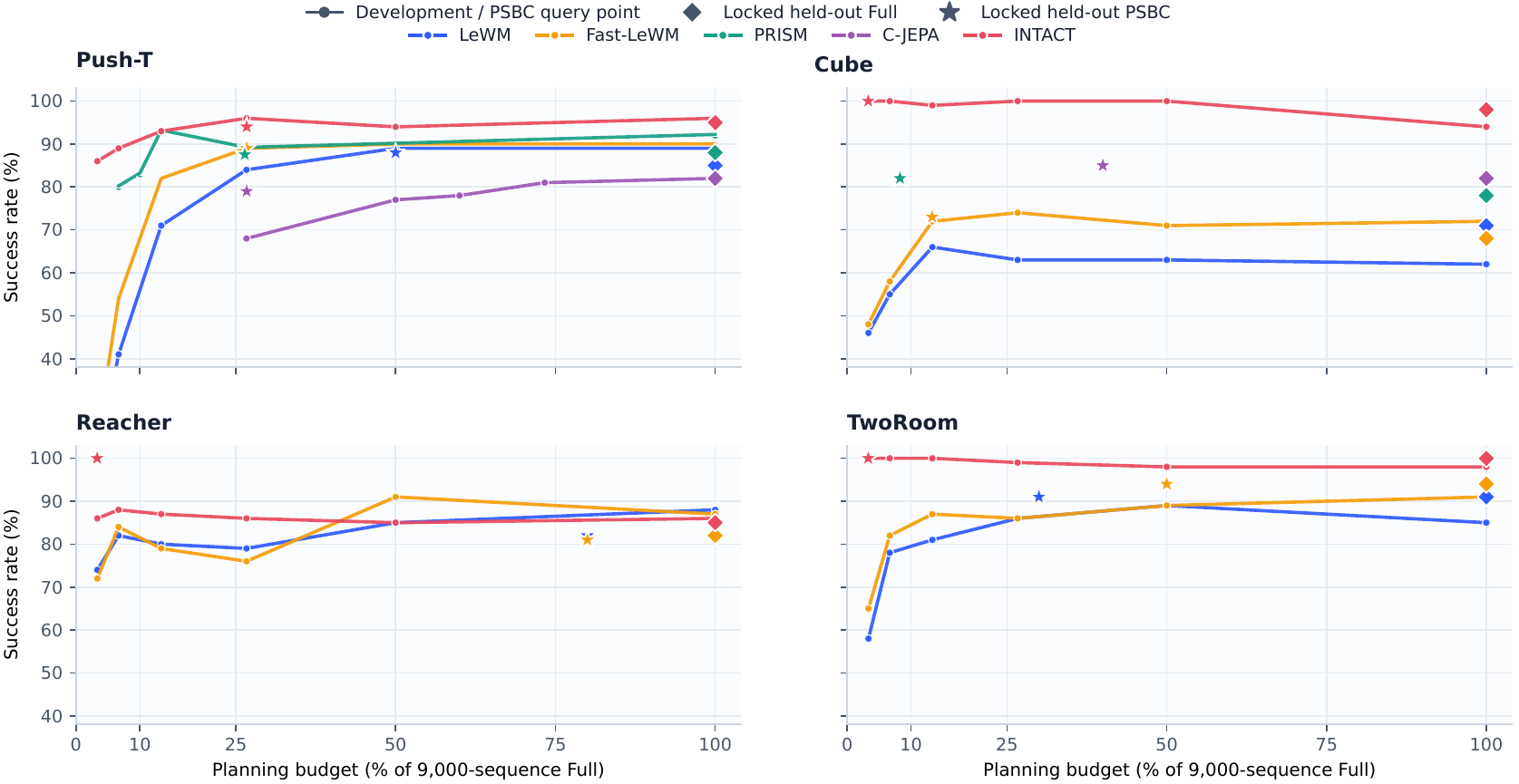}
\caption{
\textbf{Success--budget profiles across world-model planners and tasks.}
Lines show development-set success rates across planning budgets, while
diamonds and stars denote held-out Full and PSBC results, respectively.
Development curves and held-out markers correspond to distinct evaluation
splits.
}
\label{fig:diff_backbone}
\end{figure*}

\begin{table*}[t]
\centering
\caption{
\textbf{Posterior evolution after PSBC certification on PRISM.}
$q_{100}$ retrospectively evaluates the locked 100-episode set and is
not used in the original stopping decision.
}
\label{tab:early_certificate_audit}
\small
\setlength{\tabcolsep}{5pt}
\renewcommand{\arraystretch}{1.08}
\begin{tabular}{llrrrrr}
\toprule
\textbf{Task} & \textbf{Seed} & $\boldsymbol{B^\star}$ & \textbf{Stop $n$} & $\boldsymbol{q_{B,n}}$ & $\boldsymbol{q_{B,100}}$ & \textbf{SR: $B$/Full} \\
\midrule
Push-T & 0 & 3,300 & 20 & 0.956 & 0.510 & 88/90 \\
Push-T & 1 & 1,800 & 20 & 0.956 & 0.799 & 88/87 \\
Cube & 0 & 900 & 80 & 0.972 & 0.971 & 88/82 \\
Cube & 1 & 600 & 100 & 0.981 & 0.981 & 92/85 \\
\bottomrule
\end{tabular}
\end{table*}

\section{Theoretical Analysis}
\label{app:theoretical_analysis}

\subsection{PSBC Posterior Construction and Sequential Decision}
\label{app:psbc_posterior}

For a candidate budget \(B_i\), each paired episode produces one of three
mutually exclusive outcomes:
\emph{rescued}, \emph{broken}, or \emph{unchanged}.
After \(n\) paired episodes, let
\begin{equation}
R_{i,n}
=
\sum_{e=1}^{n}
\mathbb{I}\!\left[D_e(B_i)=-1\right],
\qquad
B_{i,n}
=
\sum_{e=1}^{n}
\mathbb{I}\!\left[D_e(B_i)=1\right],
\label{eq:psbc_rescued_broken_counts}
\end{equation}
and
\begin{equation}
U_{i,n}
=
n-R_{i,n}-B_{i,n},
\label{eq:psbc_unchanged_count}
\end{equation}
where \(R_{i,n}\), \(B_{i,n}\), and \(U_{i,n}\) denote the numbers of
rescued, broken, and unchanged episodes, respectively.
Here, the subscript \(B_{i,n}\) denotes the broken-episode count and
should not be confused with the candidate planning budget \(B_i\).

Let
\begin{equation}
\boldsymbol{p}_i
=
\left(
p_{i,R},p_{i,B},p_{i,U}
\right)
\end{equation}
denote the population probabilities of these three outcomes.
We place the symmetric Jeffreys-type prior
\begin{equation}
\boldsymbol{p}_i
\sim
\operatorname{Dirichlet}
\left(
\tfrac{1}{2},
\tfrac{1}{2},
\tfrac{1}{2}
\right).
\label{eq:psbc_dirichlet_prior}
\end{equation}
Given the paired outcomes observed after \(n\) episodes, conjugacy yields
\begin{equation}
\boldsymbol{p}_i
\mid
\mathcal{D}_{i,n}
\sim
\operatorname{Dirichlet}
\left(
R_{i,n}+\tfrac{1}{2},
B_{i,n}+\tfrac{1}{2},
U_{i,n}+\tfrac{1}{2}
\right).
\label{eq:psbc_dirichlet_posterior}
\end{equation}

Under the paired-outcome definition in
Eq.~\ref{eq:paired_outcome}, the population success-rate degradation of
\(B_i\) relative to Full planning is
\begin{equation}
\Delta(B_i)
=
p_{i,B}-p_{i,R}.
\label{eq:psbc_posterior_degradation}
\end{equation}
The posterior probability that \(B_i\) satisfies the prescribed
degradation tolerance \(\delta\) is therefore
\begin{equation}
q_{i,n}
=
\Pr\!\left(
p_{i,B}-p_{i,R}\leq\delta
\mid
\mathcal{D}_{i,n}
\right).
\label{eq:psbc_posterior_noninferiority}
\end{equation}
We estimate \(q_{i,n}\) by drawing samples from
Eq.~\ref{eq:psbc_dirichlet_posterior} and computing the fraction that
satisfies Eq.~\ref{eq:psbc_posterior_noninferiority}.

Let \(\tau_{\mathrm{acc}}\) and \(\tau_{\mathrm{rej}}\) denote the
acceptance and rejection thresholds.
PSBC applies the sequential decision rule
\begin{equation}
\operatorname{Decision}(B_i,n)
=
\begin{cases}
\textsc{Sufficient},
&
q_{i,n}\geq\tau_{\mathrm{acc}},\\[2pt]
\textsc{Insufficient},
&
1-q_{i,n}\geq\tau_{\mathrm{rej}},\\[2pt]
\textsc{Uncertain},
&
\text{otherwise}.
\end{cases}
\label{eq:psbc_posterior_decision}
\end{equation}
In our experiments,
\(\tau_{\mathrm{acc}}=\tau_{\mathrm{rej}}=0.95\).
An uncertain candidate receives additional paired episodes at the next
inspection time, whereas an insufficient candidate is removed from
further consideration.
The first sufficient candidate under the predefined budget-proposal
policy is returned as \(B^\star\).

\paragraph{Interpretation.}
If PSBC certifies \(B^\star=B_i\) at inspection time \(n\), then
\begin{equation}
\Pr\!\left(
\Delta(B^\star)\leq\delta
\mid
\mathcal{D}_{i,n}
\right)
\geq\tau_{\mathrm{acc}}.
\label{eq:psbc_posterior_guarantee}
\end{equation}
This is a posterior certification statement under the specified
Dirichlet model.
It should not be interpreted as a simultaneous frequentist coverage
guarantee over all candidate budgets and inspection times.
Likewise, it does not imply identical success rates on every finite
held-out sample or global minimality over the complete feasible budget
space.

The sequential trace in Fig.~\ref{fig:psbc_trace} visualizes
\(q_{i,n}\) as paired outcomes accumulate.
Because each posterior update incorporates additional stochastic
closed-loop outcomes, \(q_{i,n}\) need not change monotonically with
\(n\).

\begin{table*}[t]
\centering
\caption{
\textbf{Budget-selection criteria on PRISM/Cube.}
All candidates use the same 100 paired development episodes with a
2-point tolerance; fixed-$n$ certification requires
$q_{B,100}\geq0.95$.
}
\label{tab:psbc_selection_comparison}
\small
\setlength{\tabcolsep}{5pt}
\renewcommand{\arraystretch}{1.10}
\resizebox{\textwidth}{!}{
\begin{tabular}{rrrrrrrrcc}
\toprule
\textbf{Budget}
& \textbf{\% Full}
& \textbf{SR}
& \textbf{Full SR}
& $\boldsymbol{\Delta}$\textbf{SR}
& \textbf{Rescued}
& \textbf{Broken}
& $\boldsymbol{q_{B,100}}$
& \textbf{Empirical Accept}
& \textbf{Fixed-$n$ Cert.}
\\
\midrule
300
& 3.3
& 83
& 85
& $-2$
& 10
& 12
& 0.503
& \cmark
& \xmark
\\

600
& 6.7
& 92
& 85
& $+7$
& 13
& 6
& 0.981
& \cmark
& \cmark
\\

1,200
& 13.3
& 89
& 85
& $+4$
& 12
& 8
& 0.911
& \cmark
& \xmark
\\

2,400
& 26.7
& 91
& 85
& $+6$
& 10
& 4
& 0.984
& \cmark
& \cmark
\\
\bottomrule
\end{tabular}
}
\end{table*}

\subsection{Exact Planning Equivalence of SCR}
\label{app:scr_equivalence}

\begin{theorem}[Exact planning equivalence of SCR]
\label{thm:scr_equivalence}
Consider a planning solve with \(K\) refinement iterations.
Assume that:

\begin{enumerate}
    \item the observation history \(h\) and goal \(g\) remain fixed
    within the solve;
    \item \(E_h\) and \(E_g\) are deterministic in evaluation mode;
    \item the original and SCR planners begin from identical optimizer
    and random-number-generator states;
    \item candidate generation, candidate-dependent computation, cost
    evaluation, and optimizer updates are unchanged and deterministic
    given their inputs; and
    \item candidate ranking and tie-breaking are deterministic.
\end{enumerate}

Then, in exact arithmetic, the original and SCR planners produce
identical candidate populations, candidate costs, optimizer states, and
selected actions at every refinement iteration.
\end{theorem}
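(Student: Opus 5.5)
We argue by induction on the refinement index \(k\), treating each planner as a deterministic state machine. Let the solver state at the start of iteration \(k\) be the tuple \(S^{(k)}=(\theta^{(k)},\rho^{(k)})\), where \(\theta^{(k)}\) are the optimizer/sampling-distribution parameters (e.g., CEM mean and covariance, or the PoG-MPPI/Actor-CEM analogues) and \(\rho^{(k)}\) is the random-number-generator state. The claim is that \(S^{(k)}_{\mathrm{orig}}=S^{(k)}_{\mathrm{SCR}}\) for all \(k\in\{1,\ldots,K+1\}\), together with equality of the per-iteration populations, costs, and selected actions. The base case \(k=1\) is exactly assumption~3.

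\textbf{Step 1: equality of the static representations.} By assumption~1, the inputs \(h\) and \(g\) passed to \(E_h\) and \(E_g\) at every iteration of the original planner are the same objects used once by SCR in Eq.~\ref{eq:scr_cache}. By assumption~2 these encoders are deterministic functions. Hence the quantities \(E_h(h)\) and \(E_g(g)\) evaluated inside iteration \(k\) of Eq.~\ref{eq:original_cost} equal \(z_h\) and \(z_g\) for every \(k\).

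One subtlety must be checked here, and it is where I expect the real content of the proof to lie. Removing the encoder calls from the loop must not alter the RNG stream or any other shared state seen by the candidate-dependent computation. I would argue this via the evaluation-mode hypothesis: deterministic encoders consume no randomness (no dropout or stochastic layers). I would also state explicitly that they do not mutate buffers consulted later, such as batch-norm running statistics. This is implicit in assumptions~2 and~4, and I would make it explicit as part of what ``deterministic in evaluation mode'' means. Without it, the RNG state \(\rho\) seen by the next sampling call could differ between the two planners.

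\textbf{Step 2: inductive step.} Assume \(S^{(k)}_{\mathrm{orig}}=S^{(k)}_{\mathrm{SCR}}\). By assumption~4, candidate generation is a deterministic function of \(S^{(k)}\), so \(\mathcal{A}^{(k)}\) is identical in both planners, and the post-sampling RNG state is identical as well. The action encoding \(E_a(\mathcal{A}^{(k)})\) is then identical. Combining this with Step 1 and the fact that \(\mathcal{C}\) is unchanged and deterministic gives Eq.~\ref{eq:scr_cost_equivalence}, \(J^{(k)}_{\mathrm{SCR}}=J^{(k)}_{\mathrm{orig}}\), elementwise over the population. By assumption~5, elite selection and ranking of identical cost vectors produce identical elite sets and identical argmin candidates, including under ties. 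The optimizer update is a deterministic function of \((S^{(k)},\mathcal{A}^{(k)},J^{(k)})\), so \(S^{(k+1)}\) agrees in both planners.

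\textbf{Step 3: conclusion.} Induction up to \(k=K\) gives agreement of populations, costs, and optimizer states at every iteration. The selected action \(a^\star\) (Eq.~\ref{eq:scr_action_equivalence}) is a deterministic function of the final state and/or the tracked best candidate, so it also coincides. The caveat ``in exact arithmetic'' covers floating-point effects: batching or kernel-scheduling differences could change rounding, so exact equality is claimed only at the level of the mathematical computation.
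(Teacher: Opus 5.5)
Your proposal is correct and follows essentially the same route as the paper. Both arguments run an induction over refinement iterations on the joint optimizer/RNG state: the cached $z_h,z_g$ coincide with the in-loop encodings by determinism, identical costs then yield identical updates, and deterministic tie-breaking gives the same selected action. You also state explicitly that the in-loop static encoders must consume no randomness and mutate no buffers; this is a worthwhile tightening that the paper's proof leaves implicit when it asserts that the RNG state carries over unchanged into iteration $k+1$.
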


\begin{proof}
SCR computes
\begin{equation}
z_h=E_h(h),
\qquad
z_g=E_g(g)
\end{equation}
before refinement.
Because \(h\) and \(g\) remain fixed and their encoders are deterministic,
the cached representations are identical to those recomputed by the
original planner at every iteration.

We prove equivalence by induction over \(k\).
At \(k=1\), the two planners begin from identical optimizer and
random-number-generator states and therefore generate the same candidate
population \(\mathcal{A}^{(1)}\).
Using identical static representations and unchanged candidate-dependent
computation gives
\begin{align}
J_{\mathrm{SCR}}^{(1)}
&=
\mathcal{C}\!\left(
z_h,z_g,E_a(\mathcal{A}^{(1)})
\right)\\
&=
\mathcal{C}\!\left(
E_h(h),E_g(g),E_a(\mathcal{A}^{(1)})
\right)\\
&=
J_{\mathrm{orig}}^{(1)}.
\end{align}
Applying the same optimizer update to identical candidate costs produces
identical optimizer states for iteration \(2\).

Assume that both planners enter iteration \(k\) with identical optimizer
and random-number-generator states.
They generate the same \(\mathcal{A}^{(k)}\), and the same argument gives
\begin{equation}
J_{\mathrm{SCR}}^{(k)}
=
J_{\mathrm{orig}}^{(k)}.
\end{equation}
The unchanged optimizer update consequently preserves the optimizer and
random-number-generator states for iteration \(k+1\).
By induction, all candidate populations, costs, and optimizer states are
identical for \(k=1,\ldots,K\).
Deterministic ranking and tie-breaking therefore yield
\begin{equation}
a_{\mathrm{SCR}}^\star
=
a_{\mathrm{orig}}^\star.
\end{equation}
\end{proof}

Theorem~\ref{thm:scr_equivalence} is stated in exact arithmetic.
In finite-precision implementations, exact equality additionally requires
matched operation ordering, numerical precision, and deterministic
kernels.
The full-budget numerical contract in
Tab.~\ref{tab:scr_full_contract} verifies the implemented planners
under matched candidate samples and random states.

\begin{figure*}[t]
\centering
\includegraphics[width=1.0\textwidth]{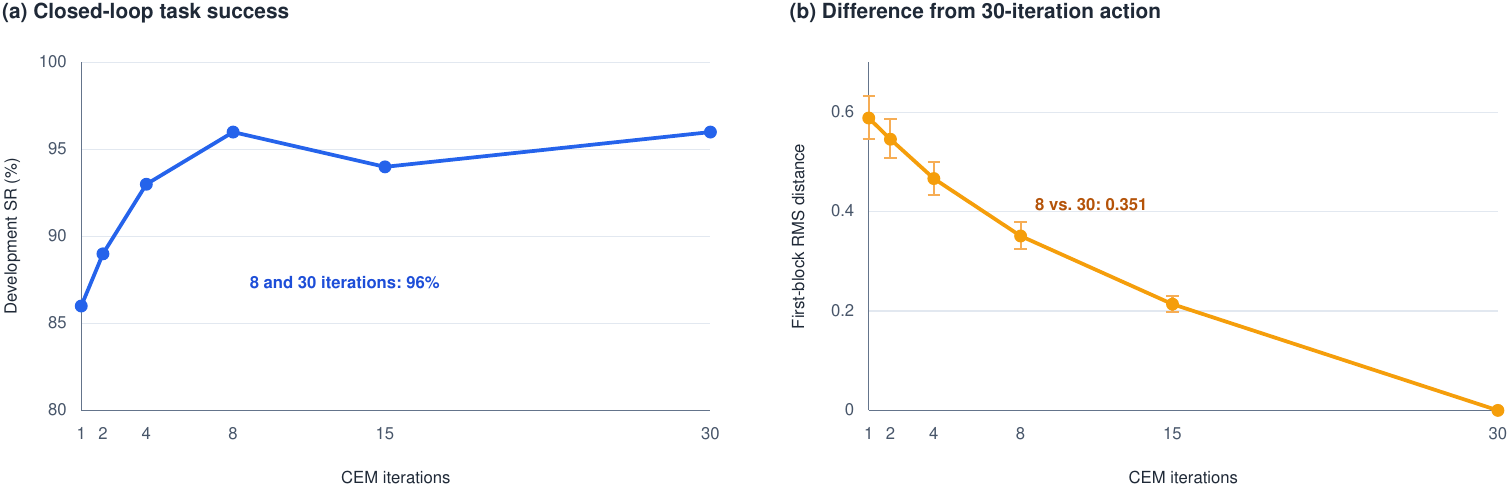}
\caption{
\textbf{Task success and action refinement on INTACT Push-T.}
(a) Development success rate across 100 episodes at different CEM iterations.
(b) Mean RMS distance between the first action block at each iteration
and its 30-iteration counterpart, with paired bootstrap 95\% confidence
intervals. Eight and 30 iterations achieve the same 96\% success rate
despite selecting different actions.
}
\label{fig:action_vs_success}
\end{figure*}

\begin{table*}[t]
\centering
\caption{
\textbf{INTACT deployment ablation on the frozen Push-T development set.}
We isolate the effects of actor-guided initialization, PSBC, and SCR.
}
\label{tab:intact_deployment_comparison}
\small
\setlength{\tabcolsep}{6pt}
\renewcommand{\arraystretch}{1.12}

\resizebox{\textwidth}{!}{
\begin{tabular}{lcccrrrr}
\toprule
\textbf{Deployment}
& \textbf{Actor Prior}
& \textbf{PSBC}
& \textbf{SCR}
& \textbf{Seq./Solve}
& \textbf{Budget (\% Full)}
& \textbf{Resulting SR (\%)}
& $\boldsymbol{\Delta}$\textbf{SR vs Direct}
\\
\midrule

Direct
& \cmark
& \xmark
& \xmark
& 0
& 0.0
& 83
& 0
\\

Guarded-A
& \cmark
& \xmark
& \xmark
& 384
& 4.3
& 88
& $+5$
\\

Pure CEM
& \xmark
& \xmark
& \xmark
& 9,000
& 100.0
& 92
& $+9$
\\

Full Actor-CEM
& \cmark
& \xmark
& \xmark
& 9,000
& 100.0
& \textbf{96}
& \textbf{$+13$}
\\

\midrule

\rowcolor{blue!6}
Actor-CEM + PSBC
& \cmark
& \cmark
& \xmark
& \textbf{2,400}
& \textbf{26.7}
& \textbf{96}
& \textbf{$+13$}
\\

\rowcolor{blue!10}
\textbf{SufficientPlan}
& \cmark
& \cmark
& \cmark
& \textbf{2,400}
& \textbf{26.7}
& \textbf{96}
& \textbf{$+13$}
\\

\bottomrule
\end{tabular}
}

\end{table*}

\subsection{Computation and Latency Analysis of SCR}
\label{app:scr_latency}

Let \(T_s\) denote the time required to encode the solve-static observation
history and goal once, \(T_d\) the candidate-dependent computation per
refinement iteration, and \(T_o\) the remaining solve-level overhead.
The original and SCR execution times are
\begin{align}
T_{\mathrm{orig}}(K)
&=
K(T_s+T_d)+T_o,
\label{eq:scr_original_latency}\\
T_{\mathrm{SCR}}(K)
&=
T_s+KT_d+T_o.
\label{eq:scr_cached_latency}
\end{align}
The resulting speedup is
\begin{equation}
S(K)
=
\frac{T_{\mathrm{orig}}(K)}
     {T_{\mathrm{SCR}}(K)}
=
\frac{K(T_s+T_d)+T_o}
     {T_s+KT_d+T_o},
\label{eq:scr_speedup_model}
\end{equation}
and the latency reduction is
\begin{equation}
R_T(K)
=
\frac{T_{\mathrm{orig}}(K)-T_{\mathrm{SCR}}(K)}
     {T_{\mathrm{orig}}(K)}
=
\frac{(K-1)T_s}
     {K(T_s+T_d)+T_o}.
\label{eq:scr_latency_reduction}
\end{equation}

These expressions distinguish the reduction in solve-static encoder calls
from the reduction in total planning latency.
Although SCR removes a fraction \(1-1/K\) of solve-static encoder calls,
candidate-dependent computation remains unchanged.
Consequently, its wall-clock benefit depends on the fraction of planning
time attributable to static encoding.

For \(T_s>0\), \(T_d>0\), and \(T_o\geq0\), the speedup in
Eq.~\ref{eq:scr_speedup_model} increases with \(K\), since
\begin{equation}
\frac{\partial S(K)}{\partial K}
=
\frac{
T_s(T_s+T_d+T_o)
}{
\left(KT_d+T_s+T_o\right)^2
}
>0.
\label{eq:scr_speedup_monotonicity}
\end{equation}
Moreover,
\begin{equation}
\lim_{K\rightarrow\infty}S(K)
=
1+\frac{T_s}{T_d}.
\label{eq:scr_asymptotic_speedup}
\end{equation}
Thus, SCR provides greater benefits as refinement depth increases, while
its asymptotic speedup remains bounded by the candidate-dependent
computation that cannot be reused.

\begin{figure*}[t]
\centering
\includegraphics[width=1.0\textwidth]{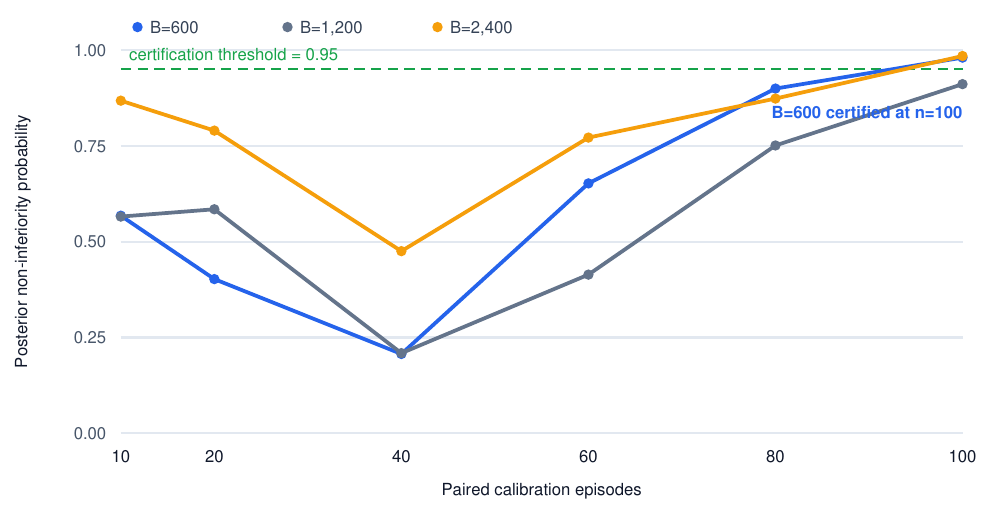}
\vspace{-0.9em}
\caption{
\textbf{Sequential certification trace of PSBC on PRISM/Cube.}
Curves show the posterior probability that each candidate budget is
non-inferior to Full planning within the prescribed tolerance as paired
calibration episodes accumulate. At 100 episodes, the 600-sequence
proposal crosses the 0.95 certification threshold and is returned as
the certified operating budget.
}
\label{fig:psbc_trace}
\end{figure*}

\section{Additional Experiments}
\label{sec:supplementary_experiments}

This section provides additional analyses of PSBC and SCR.
We first examine the variation in planning requirements across
model--task pairs, the stability of PSBC across evaluation seeds,
and its deployment efficiency.
We then compare alternative INTACT deployment strategies and analyze
the scaling behavior and numerical equivalence of SCR.

\subsection{Additional Analysis of PSBC}
\label{sec:supp_psbc}

\noindent\textbf {Model--task variation in planning budgets.} Fig.~\ref {fig:global_budget_vs_psbc} analyzes the mismatch between a universal planning budget and the model--task-specific budgets selected by PSBC. For each global budget, the figure counts the 16 evaluated pairs for which it is lower than, equal to, or higher than the corresponding PSBC-selected budget. No global setting provides a uniformly efficient allocation. For example, a 26.7\% budget falls below the selected budget for 6 pairs, whereas increasing it to 80.0\% avoids such underallocation but exceeds the selected budget for 14 of the 16 pairs. This exposes a fundamental trade-off of global budgeting: an aggressive setting may provide insufficient refinement for sensitive pairs, while a conservative setting spends unnecessary search on pairs whose performance saturates earlier. Together with the success results in Tab.~\ref {tab:planning_budget_success}, this analysis supports model--task-specific calibration as a more effective allocation strategy than imposing the same search budget across heterogeneous planners and tasks.

Fig.~\ref{fig:diff_backbone} further reports the available development-set
success rates across different planning budgets.
The success--budget profiles vary substantially across backbones and
tasks: some configurations reach strong performance with a small fraction
of Full planning, while others remain sensitive to additional refinement.
The diamonds and stars show the corresponding locked held-out results of
Full and PSBC-calibrated planning.
Development curves and held-out markers are shown together only to
visualize calibration behavior and transfer; they represent distinct
evaluation splits.

\noindent\textbf{Action refinement at equal aggregate success.}
Fig.~\ref{fig:action_vs_success} compares closed-loop success and
first-block action differences using the same INTACT Push-T checkpoint
and frozen development episodes.
Eight and 30 CEM iterations both achieve 96\% success on the frozen
100-episode development set. However, their first action blocks differ
in all 100 initial planning solves, with a mean per-dimension RMS
distance of 0.351 in native action units (paired bootstrap 95\% CI:
$[0.325,0.379]$). At the episode level, eight iterations rescue two
episodes and break two relative to 30 iterations. Thus, equal aggregate
success does not require identical actions or identical episode
outcomes. This comparison does not establish whether the optimizer
itself has converged.

\noindent\textbf{Sequential certification behavior.}
Fig.~\ref{fig:psbc_trace} presents the evolution of the posterior
non-inferiority probability as paired calibration episodes accumulate in
a PRISM/Cube calibration run.
At early evaluation points, the available evidence is insufficient to
certify the candidate budgets.
As additional paired episodes are observed, the estimated probabilities
are updated, and the 600-sequence candidate crosses the 0.95 certification
threshold at 100 episodes.
This example illustrates how PSBC retains uncertain candidates for further
evaluation and makes a decision only after sufficient paired evidence has
been collected.
Because posterior probabilities are updated from stochastic closed-loop
outcomes, the trajectories need not vary monotonically with the number of
episodes.

\noindent\textbf{Multi-seed evaluation.}
Fig.~\ref{fig:multiseed} compares Full and PSBC-calibrated planning over
evaluation seeds $\{0,1,42\}$ on Push-T and Cube.
PSBC generally retains success rates comparable to Full planning while
using substantially smaller deployment budgets.
The variation across seeds also remains small relative to the differences
among backbone--task pairs, supporting the stability of the reported
performance--budget trade-off.

\noindent\textbf{Deployment efficiency.}
Tab.~\ref{tab:deployment_efficiency} compares the planning time of Full
and PSBC-calibrated deployment on PRISM/Cube.
PSBC reduces the average planning time from 34.28 to 10.93 seconds per
episode, corresponding to a saving of 23.35 seconds per episode.
This result demonstrates that the reduced search budget translates
directly into lower deployment-time planning cost.

\noindent\textbf{Additional qualitative results.}
Fig.~\ref{fig:reacher_tworoom_qualitative} presents paired qualitative examples
on Reacher, TwoRoom, and Push-T.
For each task, LeWM+PSBC and Full LeWM are evaluated from the same initial
state toward the same goal, with the corresponding ground-truth trajectory
shown as reference.
In all three examples, LeWM+PSBC succeeds whereas Full LeWM fails despite
using a larger planning budget.
These cases provide qualitative evidence that increasing search does not
necessarily improve every closed-loop outcome and that a lower calibrated
budget can still produce a successful trajectory.
We emphasize that these examples illustrate individual paired episodes;
the aggregate performance comparison is reported in
Tab.~\ref{tab:planning_budget_success}.

\begin{figure*}[t]
\centering
\includegraphics[width=1.0\textwidth]{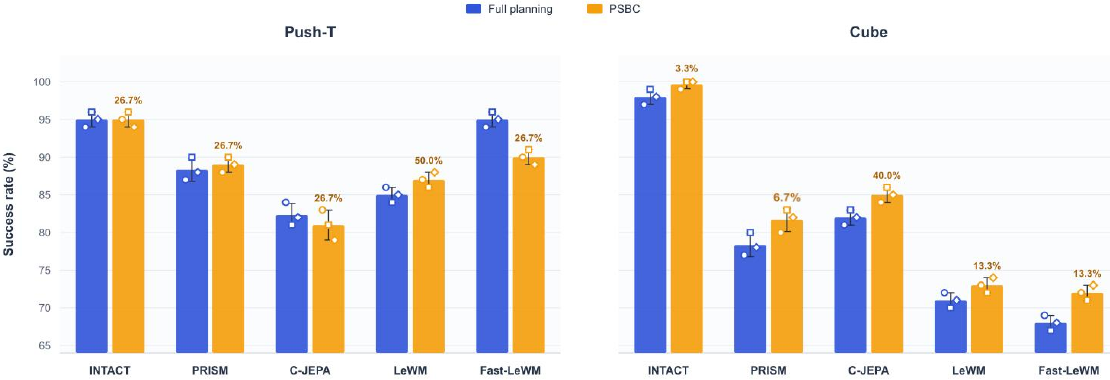}
\caption{
\textbf{Multi-seed success rates on Push-T and Cube.}
Bars and error bars report the mean and standard deviation over evaluation
seeds $\{0,1,42\}$, and markers show the individual results.
Percentages above the PSBC bars denote the corresponding deployment
budgets relative to Full planning.
}
\label{fig:multiseed}
\end{figure*}

\begin{figure*}[t]
\centering
\includegraphics[width=1.0\textwidth]{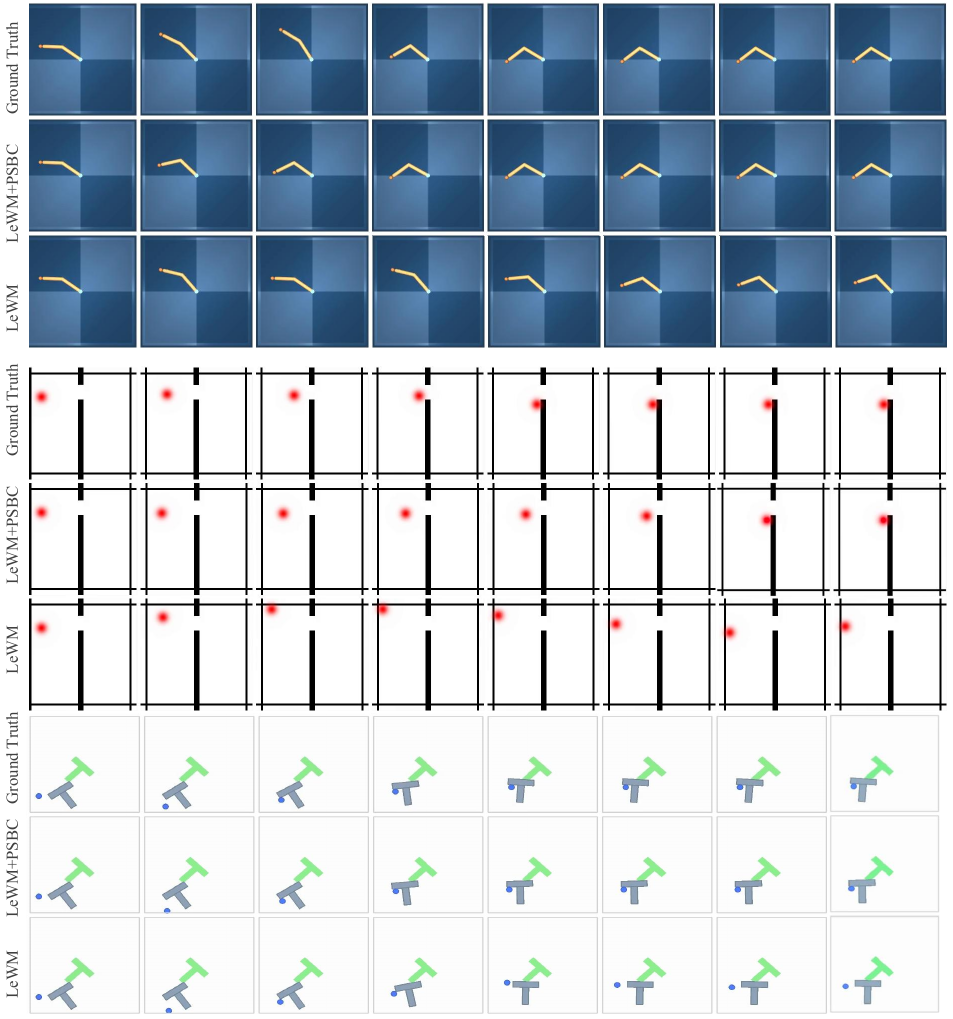}
\caption{
\textbf{Paired qualitative results on Reacher, TwoRoom, and Push-T.}
From identical initial states and goals, LeWM+PSBC succeeds while Full
LeWM fails; rows correspond to the three tasks and columns progress
from left to right.
}
\label{fig:reacher_tworoom_qualitative}
\end{figure*}

\subsection{INTACT Deployment Ablation}
\label{app:intact_deployment_ablation}

Tab.~\ref{tab:intact_deployment_comparison} compares Direct, Guarded-A,
Pure CEM, Full Actor-CEM, Actor-CEM+PSBC, and SufficientPlan on the frozen
Push-T development set.
Here, Direct executes the action predicted by the INTACT actor without
iterative search.
Guarded-A denotes the original INTACT guarded deployment strategy, which
uses the actor prediction together with a 384-sequence Actor-CEM
refinement budget.
Pure CEM performs Full-budget search without actor-guided initialization,
whereas Full Actor-CEM initializes the same Full-budget search using the
INTACT actor.
Actor-CEM+PSBC retains actor-guided initialization but uses the budget
selected by PSBC, and SufficientPlan further enables SCR within that
calibrated search.

Direct prediction provides the lowest search cost but achieves the lowest
success rate.
Guarded-A improves performance using a small refinement budget, while the
comparison between Pure CEM and Full Actor-CEM isolates the benefit of
actor-guided initialization.
PSBC retains this actor-guided benefit at a reduced search budget, and SCR
further removes repeated solve-static computation without changing the
selected budget or success outcome.

\subsection{Additional Analysis of SCR}
\label{sec:supp_scr}

\noindent\textbf{Scaling with planning depth.}
Tab.~\ref{tab:scr_budget_scaling} evaluates SCR on Fast-LeWM/Push-T over
budgets ranging from 300 to 9,000 scored sequences.
With one CEM iteration, no repeated solve-static encoding can be
eliminated across iterations.
Across repeated measurements, the uncached and SCR-enabled planners take
0.452 and 0.457 seconds per solve, respectively, corresponding to a
$0.99\times$ speedup with a paired bootstrap 95\% confidence interval of
$[0.876,1.125]$.
Thus, the one-iteration setting shows no reliable timing difference, as
expected when no cross-iteration reuse is available.

As the number of iterations increases, a larger fraction of repeated
observation and goal encoding can be reused.
At 30 iterations, SCR removes 96.7\% of solve-static encoder calls and
reduces measured planning latency from 11.4736 to 4.6796 seconds,
corresponding to a $2.45\times$ speedup.
This scaling behavior supports the mechanism described in
Fig.~\ref{fig:scr_mechanism}: the benefit of SCR increases when repeated
static encoding constitutes a larger fraction of the planning workload.

\noindent\textbf{Full-budget numerical equivalence.}
Aggregate success rates alone do not establish that SCR preserves the
internal behavior of the planner.
We therefore perform the numerical equivalence contract in
Tab.~\ref{tab:scr_full_contract} at the maximum 9,000-sequence budget
using common random numbers.
For both INTACT and Fast-LeWM, the uncached and SCR-enabled planners
produce identical candidate populations, candidate costs, CEM
distribution updates, final selected actions, and solver random-number
states.
The model buffers also remain unchanged, and both implementations score
the same number of candidate sequences.
These results verify that SCR changes the execution of solve-static
encoding but not the candidate-dependent planning computation.

\begin{table*}[t]
\centering
\caption{
\textbf{Sensitivity of the 600-sequence proposal on PRISM/Cube.}
Entries report the paired-episode inspection at which the proposal is
first certified.
``--'' indicates that the proposal is not certified within 100 paired
episodes.
}
\label{tab:psbc_sensitivity}
\small
\setlength{\tabcolsep}{9pt}
\renewcommand{\arraystretch}{1.10}
\begin{tabular}{cccc}
\toprule
& \multicolumn{3}{c}{$\boldsymbol{\tau}$}
\\
\cmidrule(lr){2-4}
$\boldsymbol{\delta}$ \textbf{(pp)}
& \textbf{0.90}
& \textbf{0.95}
& \textbf{0.99}
\\
\midrule
0 & 100 & --  & --  \\
2 & 100 & 100 & --  \\
5 & 80  & 80  & 100 \\
\bottomrule
\end{tabular}
\end{table*}

\section{Additional Analysis of PSBC}
\label{sec:additional_psbc_analysis}

We provide a mechanism-level analysis of PSBC on PRISM/Cube with
evaluation seed 1.
For this analysis, we use locked outcomes for Full planning and four
evaluated budgets, $\{300,600,1200,2400\}$, on the same frozen set of
100 paired development episodes.
These four budgets are used only for the diagnostic comparisons below
and are not presented as the complete PRISM proposal sequence.
Full planning uses 9,000 scored action sequences per planning solve.
The analyses in this section concern calibration behavior and are
separate from the episode-disjoint held-out results reported in the main
paper.

\subsection{Comparison with Simpler Selection Criteria}
\label{sec:psbc_selection_comparison}

We first compare PSBC with two simpler budget-selection criteria.
An \emph{empirical} rule accepts a candidate budget when its observed
success-rate degradation relative to Full planning does not exceed the
tolerance $\delta$.
A \emph{fixed-sample posterior} rule evaluates a candidate using all 100
paired episodes and accepts it when
\begin{equation}
q_{B,100}
=
\Pr\!\left(
    \Delta(B)\leq\delta
    \mid \mathcal{D}_{100}
\right)
\geq \tau,
\label{eq:fixed_sample_certification}
\end{equation}
where $\mathcal{D}_{100}$ denotes the complete paired calibration set.
PSBC uses the same paired posterior criterion but inspects the evidence
sequentially according to its predefined stopping rule.

Tab.~\ref{tab:psbc_selection_comparison} reports the results under the
default tolerance $\delta=2$ percentage points and posterior threshold
$\tau=0.95$.
The 300-sequence candidate achieves 83\% success, compared with 85\% for
Full planning.
Its empirical difference of $-2$ percentage points lies exactly at the
prescribed tolerance boundary and would therefore be accepted by the
empirical rule.
However, its paired outcomes contain 10 rescued and 12 broken episodes,
yielding a posterior non-inferiority probability of only 0.503.
Thus, satisfying the empirical tolerance alone does not provide
sufficient posterior evidence for certification.

The 1,200-sequence candidate provides a less extreme example.
Although its empirical success rate is 4 percentage points higher than
that of Full planning, its posterior non-inferiority probability of
0.911 does not reach the prescribed 0.95 threshold.
By contrast, the 600- and 2,400-sequence candidates obtain posterior
probabilities of 0.981 and 0.984, respectively.
The non-monotonic posterior values across budgets are consistent with
our formulation, which does not assume monotonic closed-loop performance
as the planning budget increases.

\subsection{Sequential Inspection Efficiency}
\label{sec:psbc_sequential_efficiency}

We next analyze the evaluation count induced by the predefined
sequential stopping rule.
Applying this rule retrospectively to the locked paired outcomes rejects
the 300-sequence proposal after 20 paired episodes and subsequently
certifies the 600-sequence proposal after 100 paired episodes:
\begin{equation}
300 @ 20
\;\longrightarrow\;
\textsc{Insufficient},
\qquad
600 @ 100
\;\longrightarrow\;
\textsc{Sufficient}.
\label{eq:psbc_replayed_trajectory}
\end{equation}
PSBC stops after certifying the 600-sequence proposal; therefore, larger
budgets are not visited under this replayed trajectory.

We count one candidate-budget--episode evaluation whenever a
reduced-budget planner is evaluated on one episode.
The 100-episode Full-reference vector is computed once and shared across
all paired comparisons, so it is excluded from the candidate-side counts
reported below.
Under the sequential policy, the candidate-side evaluation count is
\begin{equation}
C_{\mathrm{PSBC}}
=
20+100
=
120.
\label{eq:psbc_candidate_side_cost}
\end{equation}
An ordered fixed-$n$ posterior baseline that evaluates the same two
visited proposals on all 100 episodes requires
\begin{equation}
C_{\mathrm{fixed\text{-}n}}
=
2\times100
=
200.
\label{eq:fixed_n_candidate_side_cost}
\end{equation}
The resulting candidate-side reduction is therefore
\begin{equation}
1-
\frac{C_{\mathrm{PSBC}}}
     {C_{\mathrm{fixed\text{-}n}}}
=
1-\frac{120}{200}
=
40.0\%.
\label{eq:psbc_candidate_side_reduction}
\end{equation}

This comparison shows that sequential rejection avoids evaluating the
300-sequence proposal on the remaining 80 episodes before proceeding to
the certified 600-sequence budget.
The reported saving is an evaluation-count reduction implied by applying
the predefined stopping policy to locked paired outcomes, rather than a
separate wall-clock measurement.
It also concerns offline candidate-side calibration and is distinct from
the number of action sequences scored during held-out deployment.

\subsection{Retrospective Analysis of Early Certificates}
\label{sec:retrospective_certification}

\noindent\textbf{Retrospective evidence after an early decision.}
Tab.~\ref{tab:early_certificate_audit} applies the stated ascending
proposal and sequential inspection rules to four locked PRISM
calibration traces. Exact integration of the paired Dirichlet posterior
confirms the first certified proposal and stopping look in each case.
The two Push-T proposals are certified at $n=20$, but their posterior
probabilities fall below the certification threshold when the unused
outcomes from the same 100-episode set are inspected retrospectively.
The two Cube proposals remain above the threshold. This analysis does
not change the information available to the original stopping rule;
it shows that a posterior certificate at one inspection time need not
remain above the same threshold after additional observations.

\begin{table*}[t]
\centering
\caption{
\textbf{SCR scaling on Fast-LeWM/Push-T.}
SCR provides no reliable speedup at one iteration, while its latency
benefit increases with search depth.
}
\label{tab:scr_budget_scaling}
\small
\setlength{\tabcolsep}{7pt}
\renewcommand{\arraystretch}{1.12}

\resizebox{\textwidth}{!}{
\begin{tabular}{ccccccc}
\toprule
\textbf{Budget}
& \textbf{Iter.}
& \textbf{Uncached (s)}
& \textbf{SCR (s)}
& \textbf{Static Enc. Reduction}
& \textbf{Latency Reduction}
& \textbf{Speedup} \\
\midrule

300
& 1
& 0.4521
& 0.4571
& 0.0\%
& $-1.1$\%
& 0.99$\times$ \\

600
& 2
& 1.2422
& 0.8000
& 50.0\%
& 35.6\%
& 1.55$\times$ \\

1,200
& 4
& 2.0664
& 1.0660
& 75.0\%
& 48.4\%
& 1.94$\times$ \\

2,400
& 8
& 3.4617
& 1.7080
& 87.5\%
& 50.7\%
& 2.03$\times$ \\

4,500
& 15
& 6.2457
& 2.6330
& 93.3\%
& 57.8\%
& 2.37$\times$ \\

9,000
& 30
& 11.4736
& 4.6796
& 96.7\%
& 59.2\%
& 2.45$\times$ \\

\bottomrule
\end{tabular}
}
\end{table*}

\subsection{Sensitivity of the Certified Proposal}
\label{sec:psbc_sensitivity}

We finally examine how the evidence required to certify the
600-sequence proposal changes with the non-inferiority tolerance
$\delta$ and posterior threshold $\tau$.
We consider
\begin{equation}
\delta\in\{0,2,5\}\ \text{percentage points},
\qquad
\tau\in\{0.90,0.95,0.99\},
\label{eq:psbc_sensitivity_grid}
\end{equation}
using the predefined inspection times
$\{10,20,40,60,80,100\}$.

Tab.~\ref{tab:psbc_sensitivity} reports the first inspection time at
which the 600-sequence proposal reaches the corresponding certification
threshold.
Under the default setting $(\delta=2,\tau=0.95)$, the proposal is
certified after 100 paired episodes.
With the same tolerance and a lower threshold of 0.90, certification
also occurs at 100 episodes, whereas the more conservative threshold of
0.99 is not reached within the available evidence.
A wider 5-point tolerance permits certification after 80 episodes for
$\tau\in\{0.90,0.95\}$ and after 100 episodes for $\tau=0.99$.
Under the strict zero-point tolerance, the proposal is certified only
for $\tau=0.90$.

These results show that a wider tolerance can reduce the evidence
required for certification, while a more conservative posterior
threshold can delay or prevent certification within the available
calibration horizon.
For settings in which the 600-sequence proposal is not certified, we do
not infer the final deployment budget from this analysis, because doing
so would require evaluating the remaining budgets in the complete PRISM
proposal sequence.

\paragraph{Scope.}
This section is a mechanism-level case study rather than a cross-task
claim about calibration savings.
It shows that paired posterior certification can reject a proposal that
would be accepted using only an empirical tolerance, that sequential
inspection can reduce candidate-side calibration evaluations, and that
the required evidence varies predictably with $\delta$ and $\tau$.
Cross-backbone and cross-task held-out deployment results are reported
separately in Tab.~\ref{tab:planning_budget_success}.

\begin{table*}[t]
\centering
\caption{
\textbf{Full-budget numerical equivalence contract for SCR.}
Uncached and SCR-enabled planners are evaluated with common random
numbers at the maximum 9,000-sequence budget.
The contract verifies the internal planning trajectory rather than
only aggregate task success.
}
\label{tab:scr_full_contract}
\small
\setlength{\tabcolsep}{7pt}
\renewcommand{\arraystretch}{1.10}
\begin{tabular}{lcc}
\toprule
\textbf{Contract}
& \textbf{INTACT}
& \textbf{Fast-LeWM} \\
\midrule
Candidate population/hash equal
& \cmark & \cmark \\
Candidate cost equal
& \cmark & \cmark \\
CEM mean/std updates equal
& \cmark & \cmark \\
Final selected action equal
& \cmark & \cmark \\
Solver RNG state equal
& \cmark & \cmark \\
Model buffers unchanged
& \cmark & \cmark \\
Scored sequences equal
& 9,000/9,000 & 9,000/9,000 \\
\midrule
\textbf{Full-9000 contract}
& \textbf{PASS} & \textbf{PASS} \\
\bottomrule
\end{tabular}
\end{table*}

\section {Limitations}
\label {sec:limitations} 
\vspace {-0.3em}  
SufficientPlan has two main limitations. First, PSBC calibrates the planning budget on a frozen development distribution. Although the selected budget is evaluated without further tuning on episode-disjoint held-out data, substantial changes in the model, task dynamics, or goal distribution may require recalibration. Second, SCR applies only to computations that remain invariant within a planning solve, and its practical speedup depends on the fraction of runtime attributable to such solve-static computation. When candidate-dependent rollout and cost evaluation dominate the planning workload, the achievable latency reduction may be limited.


\end{document}